\newtheorem{theorem}{Theorem}
\newtheorem{proposition}{Proposition}
\newcommand{\beginsupplement}{%
        \setcounter{table}{0}
        \renewcommand{\thetable}{S\arabic{table}}%
        \setcounter{algorithm}{0}
        \renewcommand{\thealgorithm}{S\arabic{algorithm}}%
        \setcounter{equation}{0}
        \renewcommand{\theequation}{S\arabic{equation}}%
        \setcounter{figure}{0}
        \renewcommand{\thefigure}{S\arabic{figure}}%
        \setcounter{section}{0}
        \renewcommand{\thesection}{S\arabic{section}}%
     }
\icmltitlerunning{On-Off Center-Surround Receptive Fields for Image Classification}
\begin{document}

\twocolumn[
\icmltitle{On-Off Center-Surround Receptive Fields\\ for Accurate and Robust Image Classification}



\icmlsetsymbol{equal}{*}

\begin{icmlauthorlist}
\icmlauthor{Zahra Babaiee}{to}
\icmlauthor{Ramin Hasani}{goo}
\icmlauthor{Mathias Lechner}{ed}
\icmlauthor{Daniela Rus}{goo}
\icmlauthor{Radu Grosu}{to}
\end{icmlauthorlist}

\icmlaffiliation{to}{CPS, TU Wien}
\icmlaffiliation{goo}{CSAIL, MIT}
\icmlaffiliation{ed}{IST Austria}

\icmlcorrespondingauthor{Zahra Babaiee}{zahra.babaiee@tuwien.ac.at}

\icmlkeywords{Machine Learning, ICML}

\vskip 0.3in
]



\printAffiliationsAndNotice{} 

\begin{abstract}
Robustness to variations in lighting conditions is a key objective for any deep vision system. 
To this end, our paper extends the receptive field of convolutional neural networks with two residual components, ubiquitous in the visual processing system of vertebrates: On-center and off-center pathways, with excitatory center and inhibitory surround; OOCS for short. The on-center pathway is excited by the presence of a light stimulus in its center, but not in its surround, whereas the off-center one is excited by the absence of a light stimulus in its center, but not in its surround. We design OOCS pathways via a difference of Gaussians, with their variance computed analytically from the size of the receptive fields. OOCS pathways complement each other in their response to light stimuli, ensuring this way a strong edge-detection capability, and as a result an accurate and robust inference under challenging lighting conditions. We provide extensive empirical evidence showing that networks supplied with the OOCS edge representation gain accuracy and illumination-robustness compared to standard deep models. 

\end{abstract}

\vspace*{-3.5ex}
\section{Introduction}
\label{sec:Introduction}

The great success of convolutional neural networks (CNNs) \cite{fukushima_2003} is rooted in receptive fields, a main architectural motif of visual processing in living organisms~\cite{mack2013principles}. Originating in the retina, a receptive field defines the region of visual space within which visual stimuli affect the firing of a single ganglial neuron~\cite{hartline_1940,mack2013principles}. This motif is preserved by neurons of the visual cortex, too~\cite{hubelWiesel1968}.

However, receptive fields are just one of the motifs employed by visual processing in the retina. 
Another important motif is the center-surround (CS) motif, which divides the receptive field of a ganglial neuron into a circular excitatory region (the center), and a concentric inhibitory region (the surround)~\cite{doi:10.1152/jn.1953.16.1.37,mack2013principles}. 
\begin{figure}[t]
\centering
\includegraphics[width=0.45\textwidth]{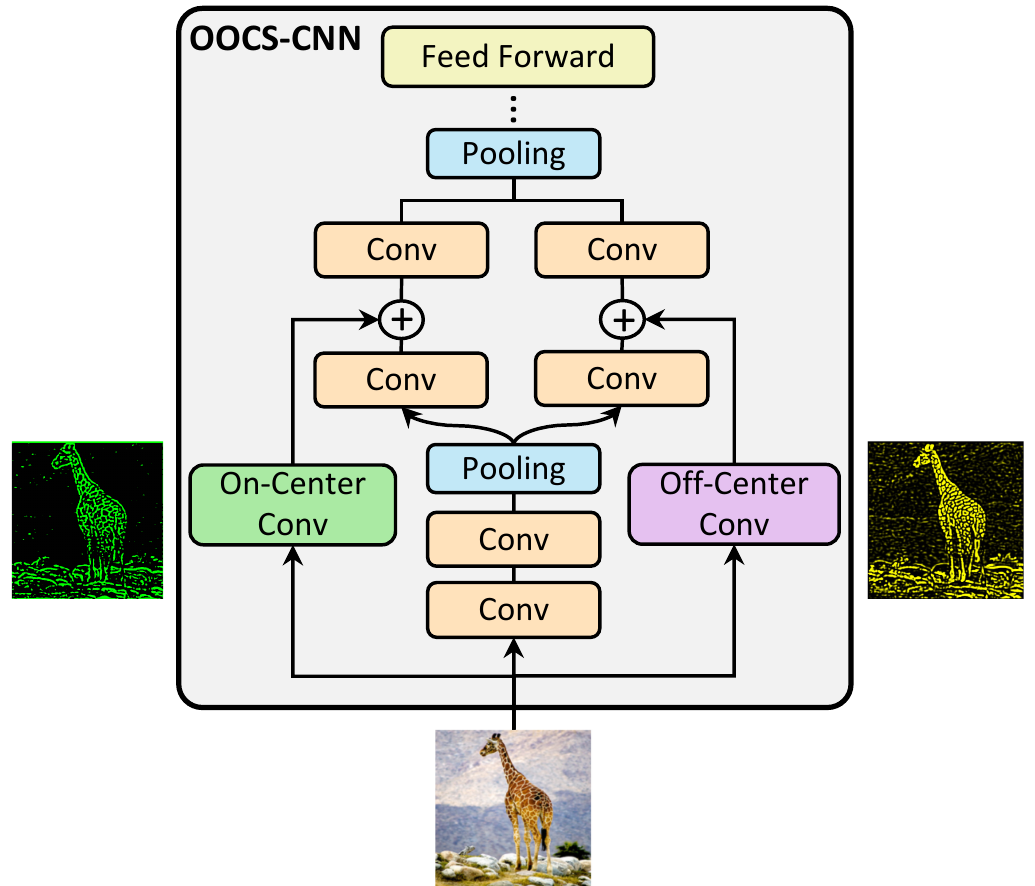}
\vspace*{-1.5ex}
\caption{On-off center-surround (OOCS) inductive biases significantly enhance the performance and robustness of a vision network in different lighting conditions.}
\label{fig:oocsArchitecture}
\vspace*{-2ex}
\end{figure}
%
Finally, a third important motif classifies the center-surround fields into either on-center, when the neurons fire in response to the presence of a light stimulus at the center of their receptive field,  and into off-center (OO), when the neurons fire in response to the absence of a light stimulus at the center of their receptive field~\cite{enroth-cugell_pinto_1972}. The center-surround motif is also reported to occur in the receptive fields of the primary level visual cortex, causing the so called surround-modulation effect~\cite{doi:10.1152/jn.1965.28.2.229,doi:10.1152/jn.1992.67.4.961,doi:10.1146/annurev.ne.08.030185.002203}. 
%
\begin{figure}[t]
\centering
\includegraphics[width=0.4\textwidth]{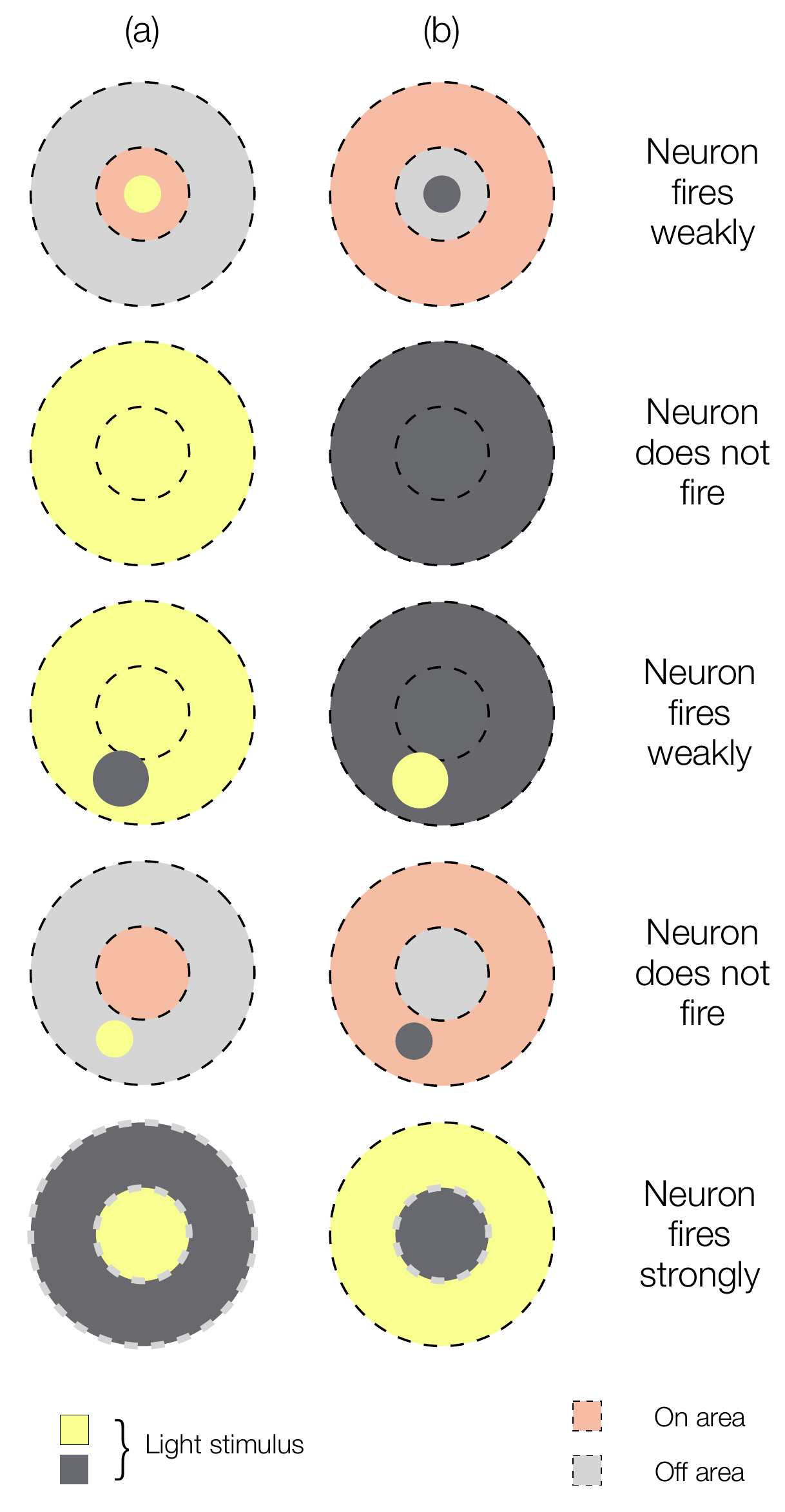}
\vspace*{-1.5ex}\caption{\cite{mack2013principles}~ The response of a) On-center neurons and b) Off-center neurons and their associated receptive fields, respectively, to five different lighting conditions.}
\label{fig:receptivefields}
\vspace*{-2ex}
\end{figure}

As shown in Figures~\ref{fig:oocsArchitecture}-\ref{fig:images}, the on-off center-surround pathways (OOCS) introduce edge detection inductive biases in a given model. If a light stimulus is turned on at the center of an on-receptive field, but not at its surround, then the on-ganglial neuron fires vigorously. If the light also touches the surround, then the neuron yields a weak response, and ceases to fire in case of a uniform or a surround-only stimulation. This is because of the mutual inhibition of the center and its surround. Complementary, when the light stimulus at the center of an off-receptive field is turned off, but not at its surround, then the off-ganglial neuron fires vigorously. The neuron ceases to fire when the light is turned off at the surround, too~\cite{enroth-cugell_pinto_1972,mack2013principles}.
Various studies of retinal circuity have shown that the on-center and off-center ganglial neurons, respectively, are at the origin of two parallel pathways~\cite{doi:10.1113/jphysiol.2005.088047, Zaghloul2645,SHAPLEY1986229}. They are physiologically and anatomically distinct, and their receptive fields cover the retinal area completely~\cite{Dacey2004-DACOO,mack2013principles}.

%

Motivated by OOCS in living organisms, we developed a procedure extending any CNN to an OOCS-CNN with same number of parameters, as shown in Figure~\ref{fig:oocsArchitecture}. This first adds the complementary CS convolutional kernels to the input processing. The kernels are fixed and precomputed. Their results are added after one of the original convolutional layers (which intuitively convolve with different kernels, possibly of different sizes). In  Figure~\ref{fig:oocsArchitecture} we add them after the third layer, which worked best on Imagenet. We also split the original layers in two, and concatenate the results thereafter, at the place of addition. 
%
Our experiments show that 
OOCS improves both the performance of CNNs in image recognition tasks, and the robustness of CNNs to challenging illumination conditions. In our 
experiments with unseen lighting and noisy test sets OOCS-CNNs also outperformed other regularization methods.

\noindent{\bf Summary of Contributions:}
%
\vspace*{-2mm}
\begin{itemize}
\itemsep-0.2em
\item We introduce OOCS, an inductive edge-detection bias for enhancing the performance and robustness to the variation of lighting conditions of vision networks.
\item We prove that on and off residual pathways capture complementary features that improve the generalization error and robustness to distribution shifts.
\item We show that OOCS can be applied to any CNN architecture without increasing their number of parameters.
\item We conduct an extensive set of experiments showing the superior performance and robustness of OOCS networks compared to standard deep models.
\end{itemize}

\section{Related Work}
\textbf{Receptive field in CNNs.} 
In each convolution layer, a small-sized kernel shifts over the input image, convolving each patch beneath it with the kernel matrix. These kernels were inspired by, and function like the receptive fields \cite{luo2016understanding,li2019scale}: they change the activity of the neurons connected to that patch in the next layer~\cite{hubelWiesel1968,li2019selective}. Fukushima's Neocognitron~\cite{fukushima_1980} is arguably the first CNN model to have imported the concept of receptive fields from neural science~\cite{hubelWiesel1968}, and inspired a large body of CNN variants~\cite{fukushima_2003, doi:10.1162/neco.1989.1.4.541, Lecun98gradient-basedlearning,inproceedings,wang2019kervolutional,ding2019acnet,hu2018squeeze}. 

\textbf{Bio-inspired Models.}
A large body of works tried to bringing insights from neuroscience to computer vision systems \cite{kim2016convolutional,zoumpourlis2017non,laskar2018correspondence,lechner2020neural,Hasani2021liquid}. Fukushima enhanced the Neocognitron model with several priors \cite{jacobsen2016structured}, such as contrast-extracting preprocessing layer, inspired by the On-Off ganglial neurons, and inhibitory surround connections, such as the surround-modulation in the visual cortex~\cite{fukushima_2003}.
More recent bio-inspired work proposed to replace the feed-forward architecture of CNNs with a recurrent architecture, by adding local-range and long-range feedback connections~\cite{NIPS2018_7775}, or by adding intrinsic horizontal connections~\cite{NIPS2018_7300}.

A center-surround architecture is also proposed in~\cite{NIPS2019_9719},
in the form of a convolutional layer with a fixed kernel (a difference of Gaussians DoG, as shown in Figures~\ref{fig:3dplots}(a) and~\ref{fig:kernels}(a-b)), to the activation map of the first layer of the network. In a fixed kernel, positive weights are introduced for close neighbor neurons, and negative weights for those that are farther apart. This approach improved the performance of CNNs on various image-classification tasks.

Similar to~\cite{NIPS2019_9719}, we also use a DoG kernel with positive weights for the center neurons and negative weights for the surround neurons, as shown in Figures~\ref{fig:3dplots}(a) and~\ref{fig:kernels}(c). However, while~\cite{NIPS2019_9719} uses a DoG kernel similar to~\cite{rodieck_1965}, our work uses the DoG kernel in~\cite{Petkov2005ModificationsOC, kruizinga_petkov_2000}. The disadvantage of~\cite{rodieck_1965} is that the variances of the Gaussians are unknown. For example, in~\cite{NIPS2019_9719} they are fixed to 1.2 for inhibitory synapses and to 1.4 for excitatory ones.  In addition, the DoG in~\cite{rodieck_1965} may result in very small numbers, that have to be normalized as in~\cite{NIPS2019_9719}, for achieving meaningful results. Using the DoG
in~\cite{Petkov2005ModificationsOC, kruizinga_petkov_2000} we do not need to perform a hyperparameter search to find the optimum value for the variances. By just knowing the size of the receptive fields, we can analytically compute the corresponding, large enough, variances.
Finally, in contrast to~\cite{NIPS2019_9719}, we use both on- and off-pathways \cite{kim2016convolutional}, as shown in Figure~\ref{fig:3dplots}, and in Figure~\ref{fig:kernels}(c) and its complement. These pathways capture complementary features which are lost in the use of either on- or off-pathways alone.

\section{Main Results}

\label{sec:Model}
In this section, we discuss our main findings. We first introduce the structure of the OOCS blocks. We then lay the theoretical grounds for their effectiveness in robustifying deep models to the variation of lighting conditions. 

\begin{figure}[t]
   \includegraphics[width=0.45\textwidth]{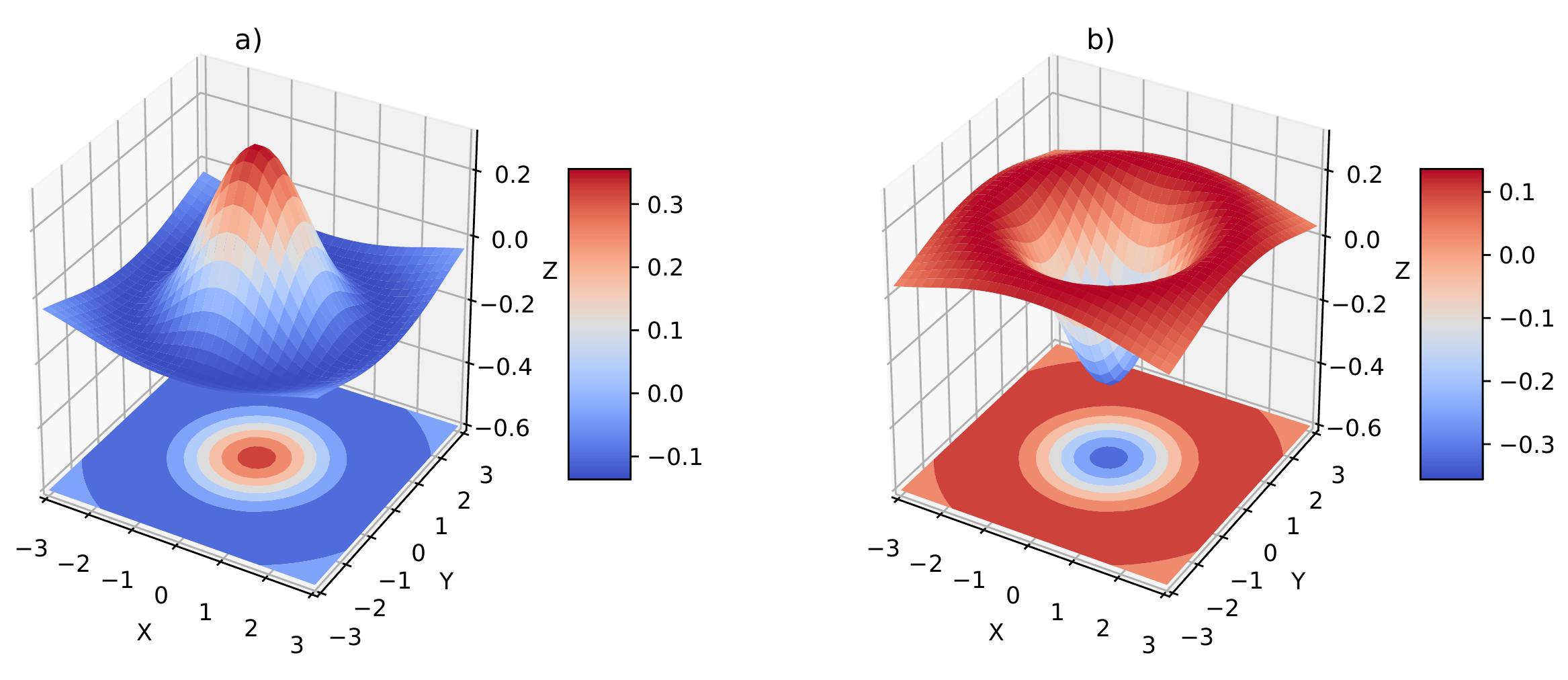}
   \vspace*{-2ex}
   \caption{Our difference of Guassians function for a) On and b) Off kernels of size 5 and center to surround ratio of $2/3$.}
    \label{fig:3dplots}
   \vspace*{-2ex}
\end{figure}

\textbf{Center Surround Kernels.} Center and surround weights can be computed by a difference of two Gaussian functions (DoG). This difference can be written in Cartesian coordinates (CC), as follows, where the CC origin is taken as the center of the receptive field \cite{rodieck_1965}:
\begin{equation}
\centering
\label{eq:rodiek}
DoG(x,y) = K_1\, e^{-\frac{x^2+y^2}{\sigma_1}} - K_2\,
e^{-\frac{x^2+y^2}{\sigma_2}}
\end{equation}
where $K_1\,{>}\,K_2$ and $\sigma_2\,{>}\,\sigma_1$~\cite{Blackburn1993ASC}. 

\textbf{Why the kernel presented by (\ref{eq:rodiek}) is not optimal?}
The main disadvantage of this model is that the variances of the Gaussians have to be determined through a hyper-parameter search. The values of the weights calculated are typically too small, and have to be normalized like in~\cite{NIPS2019_9719}, to obtain meaningful results, as in Figure~\ref{fig:kernels}(b). 
\begin{figure}[t]
    \centering
   \includegraphics[width=0.5\textwidth]{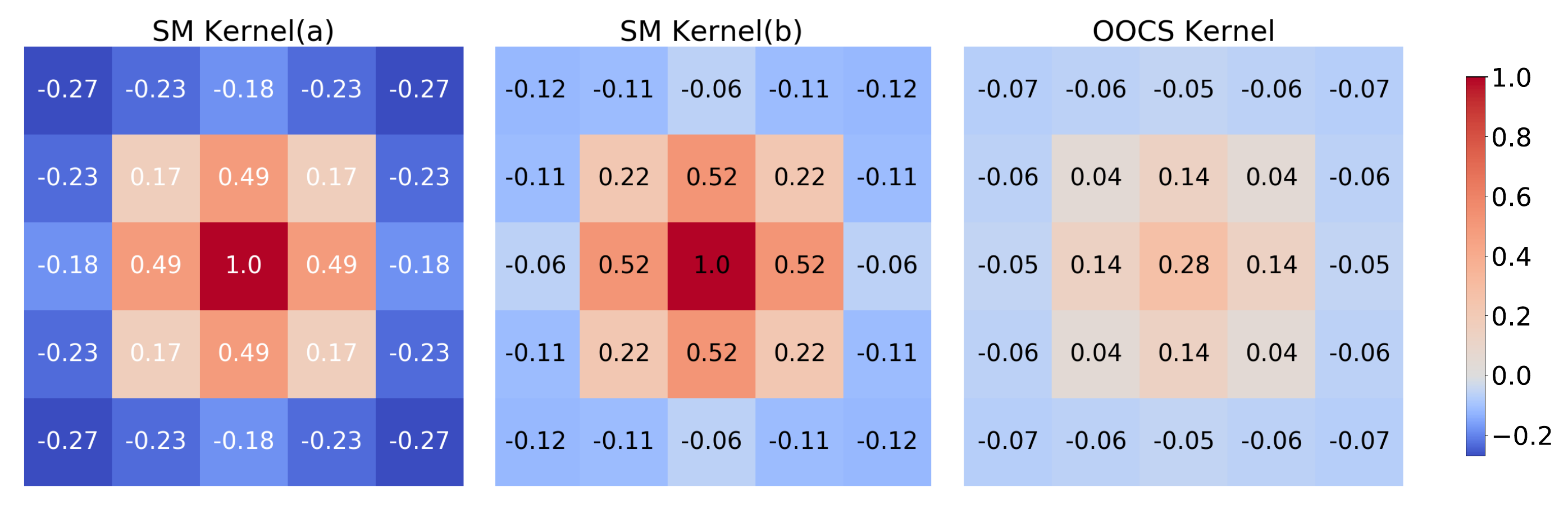}
   \vspace*{-4ex}
   \caption{From left to right: Kernel taken from~\cite{NIPS2019_9719} used in their experiments; Actual kernel calculated by their proposed method; Our on-center kernel.}
    \label{fig:kernels}
   \vspace*{-2ex}
\end{figure}
Furthermore, with $K_i = 1/2\pi\sigma_i^2$ which is used in \cite{NIPS2019_9719}, the two Gaussian functions take almost equal values for close $\sigma_1$ and $\sigma_2$. This yields their differences taking very small values everywhere. Normalizing the values to the value of the center, results in loosing the center's weight in excitation or inhibition.

\textbf{Improved Center-Surround Kernels.} To overcome the shortcomings discussed above, we set out to design an improved kernel, as shown in Figures~\ref{fig:3dplots}(a-b) and \ref{fig:kernels}(c), based on the DoG model proposed in \cite{Petkov2005ModificationsOC,kruizinga_petkov_2000}, for defining the difference of Gaussians for the center and surround kernels. This model allows us to analytically compute the variances, from the size of the receptive fields \cite{Petkov2005ModificationsOC}:
\begin{equation}
\label{eq:petrov}
DoG_{\sigma,\gamma}(x,y) =  \frac{A_c}{\gamma^2}\, e^{-\frac{x^2+y^2}{2\gamma^2\sigma^2}} - A_s\,e^{-\frac{x^2+y^2}{2\sigma^2}}
\end{equation}
Here, $\gamma$ with $\gamma\,{<}\,1$, defines the ratio between the radius $r$ of the center and that of the surround. We determine the values of the coefficients $A_c$ and $A_s$ by requiring that the sum of all positive values in Equation~\eqref{eq:petrov} are equal to those of the negative values. We normalize this and let their sum be equal to 1 and -1, respectively:
\begin{figure*}[t]
    \centering
   \includegraphics[width=1\textwidth]{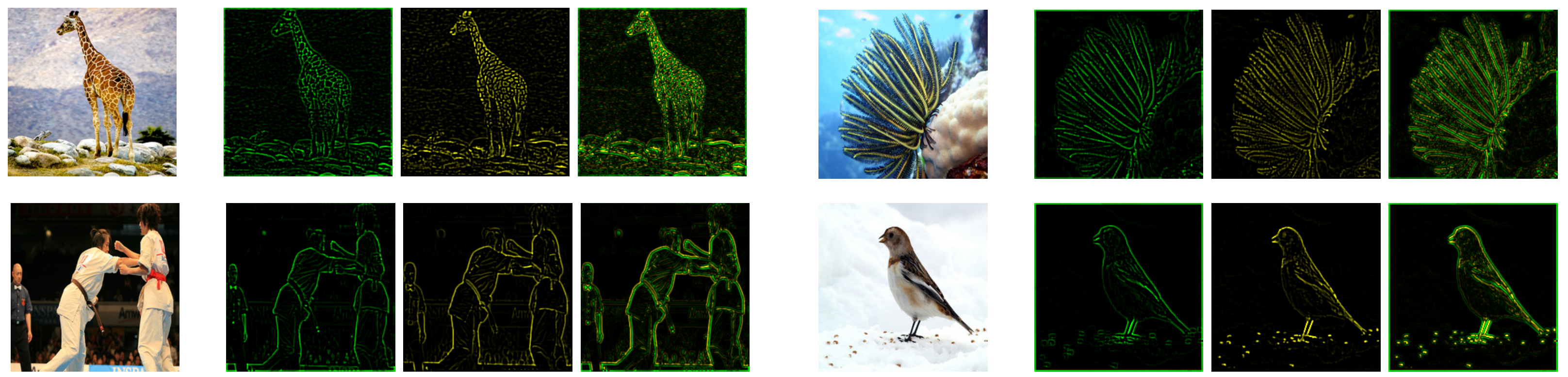}
   \caption{On-center and off-center kernel convolution on samples from Imagenet dataset. The 1st column shows the original image, the 2nd and 3rd columns show on-center and off-center filters respectively. The 4th column shows the filters added together. They extract complementary features within the original image. For example, the  bright spot top-left in the Judo image, is properly detected only by the on-center filter, whereas the dark grains on the bottom of the bird image, only by the off-center filter. Hence both filters are necessary.}
    \label{fig:images}
\end{figure*}
\begin{equation}
\iint [DoG_{\sigma,\gamma}(x,y)]^{+} dx dy = 1,
\end{equation}
\begin{equation}
\iint [DoG_{\sigma,\gamma}(x,y)]^{-} dx dy = -1
\end{equation}
By $[z]^{+}$ and $[z]^{-}$ we denote the positive and the negative half wave rectification functions, respectively:
\begin{equation}
[z]^{+} = max(0,z),
\quad [z]^{-} = min(0,z)
\end{equation}
\vspace*{-4ex}
\begin{proposition}[DoG Coefficients]
\label{theo:infinity} In the infinite continuous case, the coefficients $A_c$ and $A_s$ are equal. 
\end{proposition}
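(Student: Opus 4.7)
The plan is to exploit the fact that the two normalization conditions, when added together, collapse the split into positive and negative half-waves and leave just the unrestricted integral of $DoG_{\sigma,\gamma}$ over the whole plane. Since each summand of $DoG_{\sigma,\gamma}$ is a Gaussian whose integral over $\mathbb{R}^2$ is known in closed form, this yields a single linear relation between $A_c$ and $A_s$ that forces them to be equal.

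First, I would add the two normalization conditions. For any real-valued integrable function $f$, $[f]^+ + [f]^- = f$ pointwise, so
\begin{equation}
\iint_{\mathbb{R}^2} DoG_{\sigma,\gamma}(x,y)\,dx\,dy \;=\; \iint [DoG_{\sigma,\gamma}]^+\,dx\,dy + \iint [DoG_{\sigma,\gamma}]^-\,dx\,dy \;=\; 1 + (-1) \;=\; 0.
\end{equation}
Next, I would evaluate the left-hand side directly using the standard two-dimensional Gaussian identity $\iint_{\mathbb{R}^2} e^{-(x^2+y^2)/(2s^2)}\,dx\,dy = 2\pi s^2$. Applied to each of the two Gaussians in Equation~\eqref{eq:petrov} with variances $\gamma^2\sigma^2$ and $\sigma^2$ respectively, this gives
\begin{equation}
\iint_{\mathbb{R}^2} DoG_{\sigma,\gamma}(x,y)\,dx\,dy \;=\; \frac{A_c}{\gamma^2}\cdot 2\pi\gamma^2\sigma^2 \;-\; A_s\cdot 2\pi\sigma^2 \;=\; 2\pi\sigma^2\,(A_c - A_s).
\end{equation}
Combining the two displays yields $2\pi\sigma^2(A_c - A_s) = 0$, and since $\sigma>0$ we conclude $A_c = A_s$.

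I do not expect a genuine obstacle here; the argument is essentially one line once one notices that $[\,\cdot\,]^+ + [\,\cdot\,]^-$ is the identity. The subtle point worth flagging, and the reason the statement carefully specifies the \emph{infinite continuous} case, is that the $\gamma^2$ factor in the center amplitude is precisely what makes the two Gaussians integrate to the same value over all of $\mathbb{R}^2$; truncating the kernel to a finite receptive field or discretizing it onto a grid would change the two Gaussian masses by different amounts (the narrow center being captured essentially completely, the wider surround being clipped), breaking the cancellation and producing the small discrepancy between $A_c$ and $A_s$ observed numerically for finite kernel sizes.
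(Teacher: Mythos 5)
Your proposal is correct, and it rests on the same pivotal observation as the paper's proof: adding the two normalization conditions so that the $+1$ and $-1$ cancel, forcing the full-plane integral of $DoG_{\sigma,\gamma}$ to vanish, which then evaluates to $2\pi\sigma^2(A_c - A_s)$. The execution differs in a way worth noting, however. The paper first removes the half-wave rectifiers by identifying the sign regions geometrically (the kernel is positive on the center disk $r < r_c$ and negative on the ring $r_c < r < r_s$), converts to polar coordinates, integrates each region explicitly via antiderivatives, and only then adds the two resulting equations and takes the limit $r_s \to \infty$ to kill the boundary terms. You instead collapse the rectifiers immediately through the pointwise identity $[f]^{+} + [f]^{-} = f$ and evaluate the whole-plane integral with the standard Gaussian mass formula $\iint_{\mathbb{R}^2} e^{-(x^2+y^2)/(2s^2)}\,dx\,dy = 2\pi s^2$. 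Your route is shorter, needs no limiting argument, and does not rely on knowing where the kernel changes sign; the paper's heavier region-wise computation is not wasted, though, since the same integrals and the zero-crossing at $r_c$ are reused in its subsequent derivation of the variance $\sigma$ in the supplement. Your closing remark about why the statement is restricted to the infinite continuous case — the $\gamma^2$ prefactor equalizes the two Gaussian masses only when neither is clipped by a finite support or a discrete grid — is also correct and matches the paper's observation that $A_c$ and $A_s$ are merely close, not equal, for finite discrete kernels.
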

The proof is given in the supplementary materials.
By setting the $DoG_{\sigma,\gamma}(x,y) = 0$, and using Proposition~\ref{theo:infinity}, we can immediately calculate $\sigma$ as in Equation~\ref{eq:sigma} below, for arbitrary $r$ and $\gamma$. Note that in the finite discrete case, the values of $A_c$ and $A_s$ are still very similar:
\begin{equation}
\label{eq:sigma}
\sigma \approx \frac{r}{2\gamma}\sqrt{\frac{1-\gamma^2}{-\ln{\gamma}}}
\end{equation}
See Supplementary Materials for the complete calculations.

This model not only allows us to find the variance of the kernels analytically, but also it overcomes the normalization issues of (\ref{eq:rodiek}). The weights calculated by Eq.~\eqref{eq:petrov} do not have to be normalized and can be obtained by the kernel size and the ratio of the center to surround. Moreover, forcing the positive and negative weights to sum up to 1 and -1 results in a balance between the excitations and inhibitions. There is evidence for this balance in neuroscience (see Figure~\ref{fig:receptivefields}). 

In the DoG model of Equation~\eqref{eq:rodiek} however, the positive and negative weights do not necessarily sum up to zero, unless in very large kernels. 
We calculated the weights of the SM kernel in Figure~\ref{fig:kernels}(b)
using this equation, with the  parameters reported in~\cite{NIPS2019_9719}. However, this kernel is different from the one in Figure~\ref{fig:kernels}(a), which is actually used in their experiments. This kernel is altered in a way that the positive and negative weights sum up to zero.
%

\textbf{On and Off kernel matrices.}
We use Equation~\eqref{eq:petrov} to compute the weights in the On-center kernel matrix $DoG_{\rm On}$. For the Off-center kernel $DoG_{\rm Off}$, we use the same equation with inverted signs. For a given input  $\chi$, we calculate the On and Off responses by convolving $\chi$ with the computed kernels separately:
\\
\begin{equation}
\chi_{\rm On}[x,y] = (\chi * DoG_{\rm On})[x,y],
\end{equation}
\begin{equation}
\chi_{\rm Off}[x,y] = (\chi * DoG_{\rm Off})[x,y]
\end{equation}
It is important to note that both the on-convolution and the off-convolution covers the entirety of the input image. 

\textbf{Designing OOCS Pathways.}
As shown in Figure~\ref{fig:oocsArchitecture}, we add the computed responses to the activation maps of the first layers in the on- and off-pathways, respectively. To implement the on-off residual maps, we split the CNN layers between the on and off parallel pathways, with half of the number of the filters of the original layers, in the layers of each pathway. Thus, the number of the training parameters remains unchanged. At the end of the pathways, we concatenate the activation maps of the last on and off layers, and feed this to the rest of network. The system is structured and trained via Algorithm~\ref{alg:1}. The on-off pathways are residual connections to facilitate the training process while enhancing the network's robustness to illuminations. 

\begin{algorithm}[t]
\caption{Building and Training OOCS networks}
\label{alg:1}
\begin{algorithmic}
\STATE \textbf{Inputs:} mini batches $\mathcal{B}$, Network $N$ with layers $L=\{l_1,\dots, l_n\}$ parametrized by $\theta$, $l_{i,1/2} =$ half of layer $l_i$ DoG center radius $r$, Center-Surround ratio $\gamma$.
\STATE \textbf{Output:} OOCS Network 
\FOR{$j$ in Number of Training Steps}
\FOR{$b$ \textbf{in} $\mathcal{B}$}
\STATE $x_{\emph{on}} = l_{1,1/2}(b) + b * DoG_{on}(b, r, \gamma)$;
\STATE $x_{\emph{off}} = l_{1,1/2}(b) + b * DoG_{off}(b, r, \gamma)$;
\STATE $x_{\emph{new}}$ = Concatenate($l_{2,1/2}(x_{on})$, $l_{1,1/2}(x_{\emph{off}})$)\;
\STATE Construct the rest of $N$ on $x_{new}$ as input.
\STATE $L_{total} = \sum_{j=1}^{T} L(y_j, \hat{y}_j)$,~~~$\nabla L(\theta) = \frac{\partial L_{tot}}{\partial \theta}$
\STATE $\theta = \theta - \alpha \nabla L(\theta)$
\ENDFOR
\ENDFOR
\STATE \textbf{Return} $N_\theta$
\end{algorithmic}
\end{algorithm}

\begin{figure*}[t]
    \centering
   \includegraphics[width=1\textwidth]{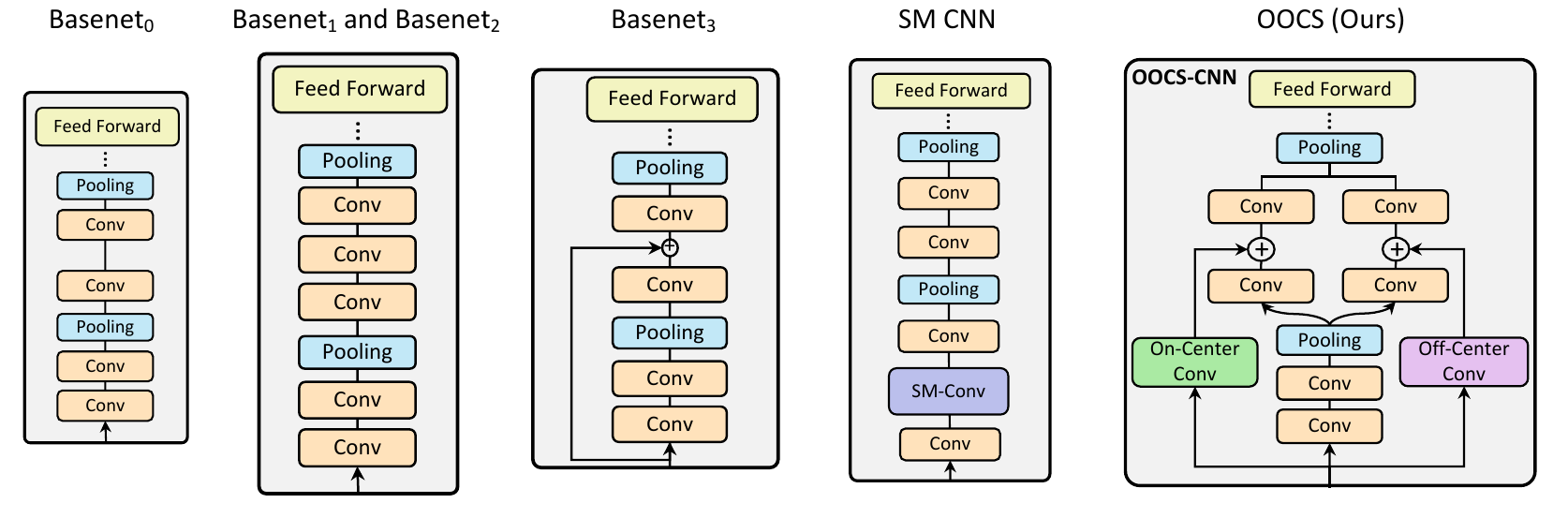}
\vspace*{-2ex}
   \caption{The network architecture for the Basenets, SM CNN and our OOCS-CNN.}
    \label{fig:architecture}
\vspace*{-2ex}
\end{figure*}

OOCS blocks can be used to extend any deep model, without the need to search for optimal hyperparameters for the on and the off kernels. Figure~\ref{fig:images} shows the saliency maps of on-convolution, off-convolution, and the combined convolution, for four samples taken from the Imagenet dataset~\cite{DBLP:conf/cvpr/DengDSLL009}. 
The OOCS kernels detect unique edge patterns, by extracting positive and negative contrasts. While they both detect edges, they do this in complementary ways. For a dark feature on a light background, the on-convolution detects the outer edges, and the off-convolution detects the inner ones. For a light feature on a dark background, roles are reversed. 
More importantly, the strength of the responses is different in the extracted edges, since the on-convolution and the off-convolution have a complementary response to light, as shown in Figure~\ref{fig:receptivefields}~(rows 1,3,5). This means that some of the features extracted by one of the convolutions are not present in the other, or they appear very weak. 
%

\begin{theorem}[On-Off Complementarity]
\label{theo:main} 
The on- and off-path\-ways learn unique and complementary features.
\end{theorem}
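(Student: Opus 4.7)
The plan is to reduce the theorem to a simple sign identity between the two DoG kernels, and then invoke the downstream rectifier nonlinearity to make ``complementary'' precise. First I would observe that by construction the off-center kernel is the on-center kernel with the signs of the center and surround coefficients interchanged, so $DoG_{\rm Off} = -DoG_{\rm On}$ pointwise. By linearity of convolution this gives the raw signed relation $\chi_{\rm Off} = -\chi_{\rm On}$ at every spatial location.

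Next I would use the fact that in the architecture of Algorithm~\ref{alg:1} the residual OOCS maps are fed into subsequent convolutional layers whose activation is a half-wave rectifier. The effective signal forwarded by each pathway is therefore $[\chi_{\rm On}]^{+}$ on the on-branch and $[\chi_{\rm Off}]^{+} = [-\chi_{\rm On}]^{+} = -[\chi_{\rm On}]^{-}$ on the off-branch. These two rectified maps partition the image domain into the bright-center/dark-surround set $\{\chi_{\rm On}>0\}$ and the dark-center/bright-surround set $\{\chi_{\rm On}<0\}$, meeting only on the measure-zero zero-crossing locus. From this decomposition I would draw two conclusions. For \textbf{complementarity}, summing the two rectified maps recovers the full absolute response $|\chi_{\rm On}|$, and neither summand alone can reconstruct the other through any pointwise map, because the rectifier irreversibly zeros one half-space. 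For \textbf{uniqueness}, the edges each pathway can detect are exactly those of one contrast polarity; dropping either pathway discards all edges of the opposite polarity. Since a generic natural image contains both polarities on sets of positive measure (as illustrated in Figure~\ref{fig:images}), each pathway supplies a non-empty set of features strictly absent from the other.

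The main obstacle will be pinning down a workable formal notion of ``feature'' that is rigorous yet consistent with the paper's informal framing. The cleanest route I would take is to identify a feature with the response that downstream weights can extract from the rectified OOCS map, and then to show that the composition of a linear convolution with a rectifier is not left-invertible: once half of the signed DoG response has been zeroed, no affine readout of the remaining half can recover it. Consequently the two pathways carry information that is pairwise irrecoverable, which is the desired uniqueness. I would organize the argument so that the polarity-complementarity of detected edges appears as an immediate corollary of the disjoint-support statement, leaving the non-invertibility lemma as the only non-trivial ingredient.
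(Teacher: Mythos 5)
Your proposal is sound, but it takes a genuinely different route from the paper's own proof. The paper argues by contradiction using a concrete witness stimulus: a spot of light smaller than the kernel center, on a dark background. Using the normalization behind Proposition~\ref{theo:infinity} (positive weights sum to $1$, negative to $-1$) together with the fact that the center mass is concentrated on a small disk (amplitude $A_c/\gamma^2$) while the surround mass is spread over a larger ring (amplitude $A_s$), it concludes that the on-map responds with values close to $1$ at the spot while the off-map's positive response, located at the spot's outer edges, is close to $0$; hence the two activation maps differ both in \emph{where} their positive values sit and in \emph{how strong} those values are, contradicting the assumption of identical features (with the symmetric argument for a dark spot on a light background). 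Your route instead starts from the exact identity $DoG_{\rm Off}=-DoG_{\rm On}$, hence $\chi_{\rm Off}=-\chi_{\rm On}$, and locates all of the complementarity in the downstream rectifier: $[\chi_{\rm On}]^{+}$ and $[\chi_{\rm Off}]^{+}=-[\chi_{\rm On}]^{-}$ have disjoint supports, sum to $|\chi_{\rm On}|$, and neither is recoverable from the other because a convolution followed by half-wave rectification is not left-invertible. This is arguably cleaner and more general: it gives a precise, input-independent notion of ``feature,'' needs nothing about the DoG beyond its zero mean, and makes explicit a fact the paper leaves implicit, namely that without a nonlinearity the two pathways would be linearly redundant. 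What it does not capture is the paper's magnitude asymmetry: because the center weights are individually large and the surround weights individually small, a feature detected strongly (near $1$) by one pathway appears only weakly (near $0$) in the other --- the phenomenon Figure~\ref{fig:images} illustrates --- and this depends on the specific normalization, not merely on the sign flip your argument uses. Note also that in Algorithm~\ref{alg:1} the ReLU acts on the OOCS response only after it has been added to a layer's activation and passed through the next convolution, so your claim that the forwarded signal is exactly $[\chi_{\rm On}]^{+}$ is a modeling simplification, though no more severe than the informality in the paper's own argument.
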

The proof of Theorem~\ref{theo:main} is given in Supplementary Materials. It uses the definition and the associated properties (eg.,~Proposition~\ref{theo:infinity}) of the on- and off-convolutions. Theorem~\ref{theo:main} demonstrates that it is not enough to use either the on- or the off-convolution, alone. Both are required to maximize information flow, in accurate and robust image recognition.


\section{Experimental Evaluation}
\label{sec:Evaluation}
We perform two sets of experiments. In Section~\ref{subsec:Classification}, we evaluate OOCS on standard image classification benchmarks, and compare its performance to competitive baselines. We also perform a couple of ablation analyses on the baseline networks and OOCS to validate our observations. 

In Section~\ref{subsec:Robustness}, we perform a rigorous robustness analysis, by shifting the test distribution from the training set (Out of IID setting). In particular, we evaluate our models under various challenging lighting condition, as well as on the MNIST data set with black-white inverted images.
%

\begin{table}[t]
  \centering
  \vspace*{-2mm}
    \caption{Top-1 test accuracy and associated variance of the control models and our different OOCS models on Imagenet. n=6}
   \vspace*{4mm}
   \begin{adjustbox}{width=1\columnwidth}
  \begin{tabular}{ll}
    \toprule
       \textbf{Models}  & 
       \textbf{Accuracy} \\
    \midrule
      Basenet$_{0}$\hspace*{2.6ex}(standard deep CNN) & 40.8 $\pm$ 0.4 \\
      Basenet$_1$\hspace*{2.6ex}(extra convolution with ReLU) & 39.4 $\pm$ 0.6 \\
      Basenet$_2$\hspace*{2.6ex}(extra convolution)  & 41.0 $\pm$ 0.5\\
      Basenet$_3$\hspace*{2.6ex}(extra skip connection) & 42.1 $\pm$0.5  \\
      Basenet$_4$\hspace*{2.6ex}(5x5 kernel skip connection) & 42.0 $\pm$0.6  \\
      Basenet$_5$\hspace*{2.6ex}(OOCS without on/off kernels) & 41.3 $\pm$0.7  \\
    \midrule
      SM-CNN$_{0}$\hspace*{1ex}(kernel given in Figure~\ref{fig:kernels}a) &  38.1 $\pm$ 0.6    \\
      SM-CNN$_1$\hspace*{1ex}(kernel given in Figure~\ref{fig:kernels}b) &  37.0 $\pm$  0.4 \\
      SM-CNN$_2$\hspace*{1ex}(kernel given in Figure~\ref{fig:kernels}c) &  41.9 $\pm$  0.8 \\
    \midrule
      OOCS$_{0}$\hspace*{1ex}(on-kernel given in Figure~\ref{fig:kernels}c) &  \textbf{44.4 $\pm$ 0.3}   \\
      OOCS$_1$\hspace*{1ex}(kernel size 3x3, CS ratio 1/2) &  43.4 $\pm$ 0.5  \\
      OOCS$_2$\hspace*{1ex}(OOCS weights $1/n_c$ and $1/n_s$) &  43.6 $\pm$ 0.6  \\
      OOCS$_3$\hspace*{1ex}(OOCS computed after first pool) &  44.1 $\pm$ 0.4  \\
      OOCS$_4$\hspace*{1ex}(OOCS with trainable kernels) &  \textbf{44.5 $\pm$ 0.9}  \\
    \bottomrule
  \end{tabular}
  \end{adjustbox}
  \label{Imagenet_results-table}
  \vspace*{-2ex}
\end{table}

We implemented all models in TensorFlow 2.3 \cite{199317}, and used Adam for optimization \cite{DBLP:journals/corr/KingmaB14} with a learning rate of $10^{-4}$. We repeated all of the experiments for six times and report the mean value of the obtained results, together with their standard deviation. The latter was computed with a confidence level of 99.9\%.
\label{headings}

\begin{figure*}[t]
    \centering
    \includegraphics[width=0.7\textwidth]{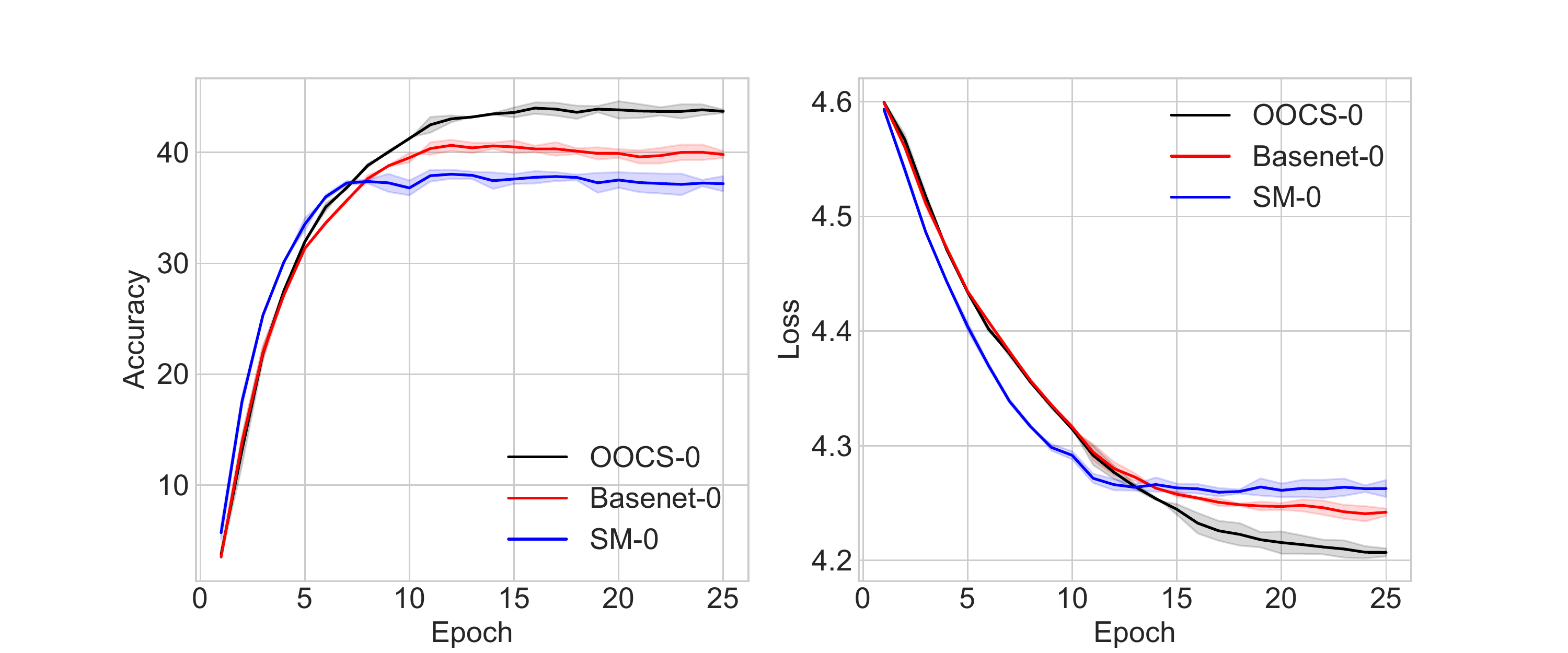}
    \caption{Validation accuracy and loss of the Basenet$_0$, OOCS$_0$, and SM-CNN$_0$  across epochs.}
    \label{fig:valtrain}
\end{figure*}

\subsection{Image Classification}
\label{subsec:Classification}
In this section, we assess the performance (accuracy) of OOCS models in image classification tasks.

\textbf{Dataset.} We used a subset of the Imagenet~\cite{DBLP:conf/cvpr/DengDSLL009}. We randomly chose 600 samples from 100 categories. From these samples, we used 500 of each class for the training set, and 50 for each of the validation and test sets.  After cropping all the rectangular images to squares around the center, we resized all the images to $192\times192$ pixels. We did not perform any preprocessing on the images.

\textbf{Architectural details.}
We use three main architectures for this experiment as shown in Figure~\ref{fig:architecture}: Basenet$_{0}$, SM-CNN$_{0}$, and OOCS$_{0}$. We start from a standard deep CNN architecture for image recognition that we call Basenet$_{0}$. It consists of seven convolutional layers, five max-pooling layers, and two fully connected layers. All convolutional layers have $3\times3$ kernels and strides of 1.  We initialized all kernels with the He initialization~\cite{He_2015}. The max-pooling layers have $2\times2$ kernels and strides of 2. Dropouts~\cite{JMLR:v15:srivastava14a} follow the fully connected layers, and the last layer predicts the category with a softmax function. All of the hidden-weight layers have ReLU activation functions.
We have also three other variations Basenet$_{1}$, Basenet$_{2}$ and Basent$_{3}$ discussed in the "what happens" paragraph below.

We use the Surround-Modulation model of~\cite{NIPS2019_9719} for SM-CNN$_{0}$, with the kernel given in Figure~\ref{fig:3dplots}(b). Although SM and OO retinal receptive fields are two distinct biological phenomena, there are similarities between them and the way they are modeled. To ensure fairness, for the SM-CNN, we added the SM kernel to the activation maps of the first convolution layer in our baseline CNN, precisely aligned with what the authors suggested, and used the kernel weights they have reported. The variants SM-CNN$_{1-2}$ are described below in "what happens"..

For the OOCS-CNN network OOCS$_{0}$, we split the convolutional layers between the first and second pooling layers, into two parallel pathways with half of the filters of the original layers in each. The on-response is calculated by convolution with the on-kernel given in Figure~\ref{fig:3dplots}(c) and the off-response with its complementary kernel. The weights of these kernels are specified as non-trainable parameters.

\textbf{Performance.}
We compared the test accuracy of OOCS$_0$ with the performance of the Basenet$_0$ and of SM-CNN$_0$. Figure~\ref{fig:valtrain} shows the validation accuracy and loss for the three main network variants. As one can see, OOCS$_0$ outperforms the others by a large margin. Moreover, one can observe that SM-CN$_0$ has a better sample efficiency as it is stated in~\cite{NIPS2019_9719}, but has a poor performance on the validation set in comparison to the Basenet$_0$.

\begin{figure*}[t]
\centering
   \includegraphics[width=0.7\textwidth]{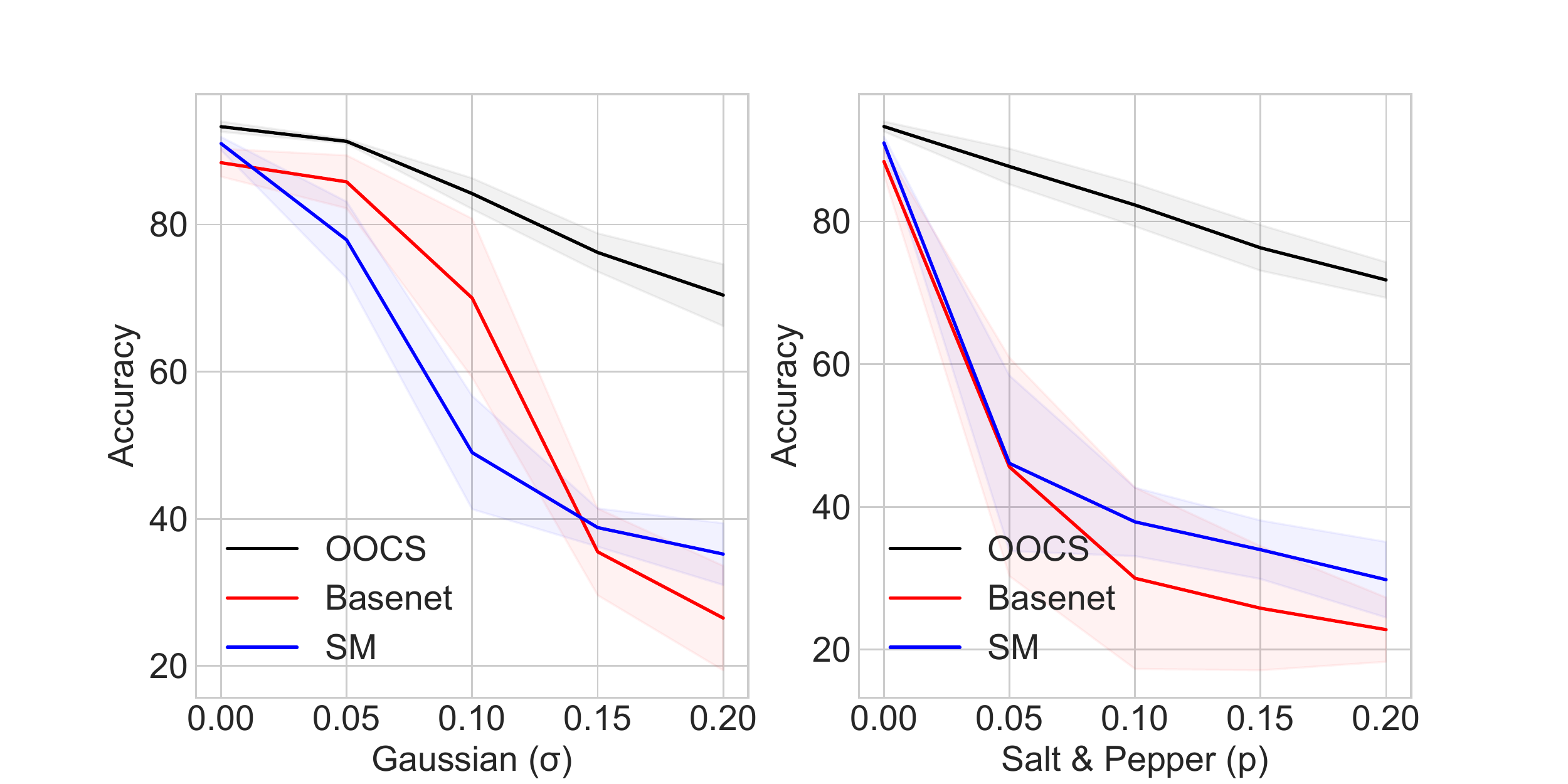}
   \caption{Noise-robustness. Test accuracy of the OOCS-CNN, the SM-CNN and the Basenet as a function of increasing input noise variance, with illumination Light0, and with n=6}
    \label{fig:noise_results}
    \vspace*{-4ex}
\end{figure*}

\textbf{What happens if we alter the architectures?} To more thoroughly evaluate OOCS performance, we designed eleven additional network variants: Basenet$_{1-5}$, SM-CNN$_{1-2}$, and OOCS$_{1-4}$. Basenet$_{1-2}$ were designed by adding an extra layer to Basenet$_0$, to see whether the performance of OOCS$_0$ can be obtained by increasing the number of training parameters of Basenet$_0$. This extra layer was added between the first and second pooling layers, with 64 filters. This new layer has a ReLU activation function in Basenet$_1$, and no activation function in Basenet$_2$. Basenet$_{3-4}$, have a skip-connection from the input to the output of the third convolution layer of Basenet$_0$, thus constructing a residual block~\cite{he2016deep}. This investigates if the performance of OOCS$_0$ is in fact related to the skip connection between the input and on-off pathways. The skip connection in Basenet$_4$ is a convolution layer with $5\times{}5$ kernel. Finally, Basenet$_5$ has the exact same architecture of OOCS$_0$, with the parallel pathways but with trainable filters alongside the skip connections. The kernels are initialised with the He initialization~\cite{He_2015}. 

SM-CNN$_1$ uses the kernel 
given in Figure~\ref{fig:kernels}(b), which is calculated from the DoG proposed by~\cite{NIPS2019_9719}, in Equation~\eqref{eq:rodiek}. We also designed an extra variant, called SM-CNN$_2$, which uses the OOCS kernel for the on-center-surround DoG, and which is given in Figure~\ref{fig:kernels}(c). 

OOCS$_1$ uses a smaller kernel to perform the convolutions: $3\times{}3$ as the size for the receptive field, and $\gamma\,{=}\,1/2$ as the center-surround ratio. OOCS$_2$ uses $1/n_c$ and $1/n_s$ as the weights for the elements in the center and surround in the kernel matrix, respectively, where $n_c$ is the number of central elements and $n_s$ is the number of elements in the surround. The aim was to indicate the effectiveness of the proposed DoG model. OOCS$_3$ computes the on- and off-responses from the inputs to the first layers of the pathways and adds them to the activation maps of the first layers in the pathways. Finally, OOCS$_4$ has the same architecture of OOCS$_0$, but with trainable filters on the On and Off skip connections, initialised with the On and Off kernels. Table~\ref{Imagenet_results-table} shows the top-1 test accuracy for the main and control networks. This model was added to see whether the On and Off kernels structure will be reserved throughout the training or we will eventually converge to other settings. All $64$ filters kept the OOCS structure, with differences in on and off weights intensities. OOCS$_0$ and it's variants OOCS$_{1-4}$  outperform all other models.

The results suggest that OOCS$_0$ performance can neither be achieved by increasing the capacity of Basenet$_0$ nor by adding skip-connections to Basenet$_0$. Moreover, the results for Basenet$_4-5$ show that the On-Off Center-Surround structure can not be learned by network without initialising or fixing the kernels with the calculated On and Off kernels. The On Center-Surround kernel of OOCS$_0$ also increases the performance of SM-CNN$_0$ considerably, and outperforms Basenet$_0$. Finally, SM-CNN$_1$ has a lower accuracy than Basenet$_0$.

The accuracy of OOCS$_1$ shows that with a smaller kernel, one still outperforms the Basenets and the SM-CNNs, but it decreases the accuracy compared to Basenet$_0$. OOCS$_2$ achieved a higher accuracy than these networks, too. This shows that forcing the positive and negative parts of the kernel to sum up to 1 and -1 respectively plays an important role in the OOCS-CNN's superior performance. Finally, OOCS$_3$ also outperforms the Basenets and SM-CNNs.

\textbf{OOCS ResNet.}
Here we modify the architecture of a residual network\cite{he2016deep} to include OOCS and evaluate the performance of it with and without the OOCS filters.

\textbf{Dataset.}
In this experiment, we use the same subset of Imagenet. We augmented the training images with randomly rotating them in the range of $15$, randomly shifting horizontally and vertically in the range of 10 percent of total width and height, and randomly flipping the images horizontally. 
\begin{table}[t]
  \centering
    \caption{Top-1 test accuracy and associated variance of the Resnet34 and Resnet34-OOCS models on Imagenet. n=6}
  \begin{tabular}{ll}
    \toprule
    \textbf{Models}  & \textbf{Acc} \\
    \midrule
     ResNet34 & 61.73 $\pm$ 0.6 \\
     ResNet34-OOCS &  \textbf{63.39 $\pm$ 0.7} \\
    \bottomrule
  \end{tabular}
  \label{Imagenet_resnet_results-table}
\end{table}

\textbf{Architectural details and Performance.}
We used the ResNet34\cite{he2016deep} as the base network. In order to add OOCS to it, we splitted the first two layers of the first residual block into the On and Off pathways with half of the filters of the original layers in each. The On and Off responses were calculated from the inputs to the first residual block. 

As Table~\ref{Imagenet_resnet_results-table} shows the results for both networks, OOCS enhances the performance of larger networks like ResNets and does not lose its effectiveness with data augmentation.

\begin{table*}[t]
  \centering
  \caption{Test Accuracy on all 6 lighting conditions for networks trained on Light0. n=6.}
  \vspace*{1.5ex}
  \begin{tabular}{lcccccc}
    \toprule
    \multicolumn{1}{c}{} & \multicolumn{1}{c}{Familiar Test Set} & \multicolumn{5}{c}{Unfamiliar Test Set}                   \\
   \midrule
    \textbf{Models}     & Light0     & Light1   & Light2    & Light3    & Light4    & Light5 \\
    \midrule
    Basenet & 88.4 $\pm$ 1.9  & 82.2 $\pm$ 2.4  & 46.4 $\pm$ 4.1 & 86.2 $\pm$ 1.5  & 58.8 $\pm$ 2.7  & 79.5 $\pm$ 4.9     \\
    Basenet-L2 & 89.6 $\pm$ 1.9  & 87.0 $\pm$ 2.8  & 48.5 $\pm$ 3.8  & 86.9 $\pm$ 1.5  & 60.8 $\pm$ 3.5 & 80.3 $\pm$ 4.5      \\
    Basenet-Dropout & 89.2 $\pm$ 2.0  & 86.3 $\pm$ 5.7 & 48.5 $\pm$ 4.1  & 86.3 $\pm$ 1.2 &  59.9 $\pm$ 3.8 & 81.3 $\pm$ 5.4      \\
    Basenet-BN     & 89.2 $\pm$ 1.4  & 84.0 $\pm$ 3.7 & 37.4 $\pm$ 7.7 & 87.3 $\pm$ 2.0  & 56.2 $\pm$ 3.7  & 78.9 $\pm$ 4.4   \\
    SM-CNN     & 91.0 $\pm$ 0.9  & 90.6 $\pm$ 0.6  & \textbf{62.5 $\pm$ 1.5}  & 87.3 $\pm$ 1.6  & 61.8 $\pm$ 2.3 & 81.6 $\pm$ 1.7 \\
    OOCS-CNN (Ours)    & \textbf{93.3 $\pm$ 0.7} & \textbf{90.9 $\pm$ 0.7} & 56.7 $\pm$ 1.6 & \textbf{91.2 $\pm$ 1.0} & \textbf{61.9 $\pm$ 1.5} & \textbf{88.2 $\pm$ 1.0}  \\
    \bottomrule
  \end{tabular}
  \label{light0-table}
  \vspace*{-2ex}
\end{table*}
\subsection{Robustness Evaluation}
\label{subsec:Robustness}
Here we evaluate OOCS robustness. To this end we extensively study how deep CNN models perform under the variation of lighting conditions and to distribution shifts. 

\textbf{Dataset.} We used the Norb dataset~\cite{DBLP:conf/cvpr/LeCunHB04} to assess the robustness of our proposed OOCS-CNN architecture. This dataset contains images of 3D objects belonging to five generic categories. The images are of size $96\,{\times}\,96$ pixels, and photographed under six different lighting conditions. In a first experiment, we trained our networks on the images from one lighting condition, Light0, and then tested them on all six lighting conditions. In a second experiment we added different kinds of noise to the testing images, in order to evaluate the robustness in the presence of noise.

\textbf{Architectural details.}
Given the image size, we designed CNN Basenets with 6 convolutional layers, 4 max-pooling layers, and 2 fully connected layers. All convolutional layers have $3\times3$ kernels, and strides of 1. The max-pooling layers have $2\times2$ kernels and strides of 2. The number of filters in the first convolution layer is 32 and gets doubled after each pooling layer. For the OOCS-CNN, we split the layers between the first and second max-pooling to On and Off pathways and compute the On and Off convolutions from the input to the first layers of pathways. 

\textbf{Robustness Evaluation.}
To assess the robustness of OOCS-CNNs under changes in illumination, we tested the networks on images with a lighting condition that was different from the one in the training set. We compare the performance of our model with Basenet-L2, Basenet-Dropout and Basenet-BN, in addition to the Basenet and SM-CNN. 
\begin{table}[t]
   
  \centering
  
  \caption{Test Accuracy and variance for images from Light0 with Gaussian noise for networks trained on Light0. n=6.}
  \vspace*{1.5ex}
  \begin{adjustbox}{width=1\columnwidth}
  \begin{tabular}{lcccc}
    \toprule
    \multicolumn{1}{c}{} & \multicolumn{4}{c}{\textbf{Gaussian Noise ($\sigma$)}} \\
    \midrule
    \textbf{Models} &0.05&0.1&0.15&0.2\\
    \midrule
    Basenet&$85.8_{\pm3.6}$&$70.0_{\pm10.8}$&$35.5_{\pm5.9}$&$26.5_{\pm7.1}$\\
    Basenet-L2&$84.8_{\pm2.0}$&$73.3_{\pm3.9}$&$46.0_{\pm10.4}$&$33.0_{\pm8.1}$\\
    Basenet-D&$87.5_{\pm2.9}$&$80.8_{\pm2.4}$&$66.5_{\pm11.7}$&$42.2_{\pm17.7}$\\
    Basenet-BN& $84.4_{\pm2.8}$&$69.8_{\pm10.3}$&$48.8_{\pm13.3}$&$38.4_{\pm8.3}$\\
    SM-CNN&$77.9_{\pm5.2}$&$49.0_{\pm7.7}$&$38.8_{\pm2.6}$ &$35.2_{\pm4.2}$\\
    OOCS-CNN&$\mathbf{91.3_{\pm0.3}}$&$\mathbf{84.2_{\pm2.1}}$&$\mathbf{76.2_{\pm2.6}}$&$\mathbf{70.4_{\pm4.0}}$ \\
    \bottomrule
  \end{tabular}
  \end{adjustbox}
  \label{light0noisegaussian-table}
\end{table}

\begin{table}[t]
  \fontsize{9.0pt}{10.8pt} \selectfont
  \caption{Test Accuracy and variance for images from Light0 with Salt\,\&\,Pepper noise for networks trained on Light0. n=10.}
  \vspace*{1.5ex}
  \centering
  \begin{adjustbox}{width=1\columnwidth}
  \begin{tabular}{lllll}
    \toprule
    \multicolumn{1}{c}{} & \multicolumn{4}{c}{\textbf{Salt and Pepper Noise ($p$)}} \\
    \midrule
    \textbf{Models}     & 0.05     & 0.1   & 0.15    & 0.2 \\
    \midrule
    Basenet & $45.6_{\pm15.3}$ & $30.0_{\pm12.7}$ &$25.8_{\pm8.7}$  & $22.8_{\pm4.5}$     \\
    Basenet-L2 & $49.7_{\pm9.4}$ & $31.6_{\pm7.5}$ & $26.0_{\pm5.4}$ & $23.7_{\pm3.5}$      \\
    Basenet-D & $71.5_{\pm7.5}$ & $48.5_{\pm13.3}$ & $29.1_{\pm12.1}$ & $22.1_{\pm3.5}$      \\
    Basenet-BN     & $54.2_{\pm14.7}$ & $40.6_{\pm13.0}$ & $34.2_{\pm10.8}$ & $29.7_{\pm8.3}$  \\
    SM-CNN  & $46.1_{\pm12.3}$ & $37.9_{\pm4.8}$ & $34.0_{\pm4.1}$ & $29.8_{\pm5.3}$ \\
    OOCS-CNN & $\mathbf{87.7_{\pm2.5}}$ & $\mathbf{82.3_{\pm3.0}}$ & $\mathbf{76.3_{\pm3.2}}$ & $\mathbf{71.8_{\pm2.5}}$  \\
    \bottomrule
  \end{tabular}
  \end{adjustbox}
  \label{light0noisesp-table}
\end{table}

Table~\ref{light0-table} shows the test-accuracy and associated variance for all baselines trained on Light0. Except for Light2, where SM-CNN proved to be superior to the other networks, the OOCS-CNN outperformed the other CNN variants with a considerable margin. 

We then added Gaussian and Salt $\&$ Pepper noise to the test set, to evaluate the performance of the networks under perturbations. Table~\ref{light0noisegaussian-table} and Table~\ref{light0noisesp-table} show the test accuracy and variance for networks, trained on Light0 and tested on the same lighting condition, but in the presence of noise. 

The results show that the OOCS-CNN is more robust than other regularization methods. Not only has it higher accuracy, the results of the OOCS-CNN have considerably lower variance in comparison to the other networks on modified test sets. 
Figure~\ref{fig:noise_results} shows the significant level of robustness to noise achieved by OOCS compared to other methods.

\textbf{Robustness to distribution shifts.}
As discussed before, on- and off-convolutions extract different features for objects on light and dark backgrounds. Both are needed to capture higher details. To assess our claim more thoroughly, we designed the following experiment. 

\textbf{Dataset.}
We used the MNIST dataset \cite{lecun-mnisthandwrittendigit-2010} to train our networks. The original dataset contains black handwritten digits on white background. We trained our networks on the original images, but for the test set we altered all the pixel values by subtracting them from $255.$ Figure \ref{fig:mnist} shows the original/altered image for one sample.
\begin{figure}[t]
\centering
   \includegraphics[width=0.27\textwidth]{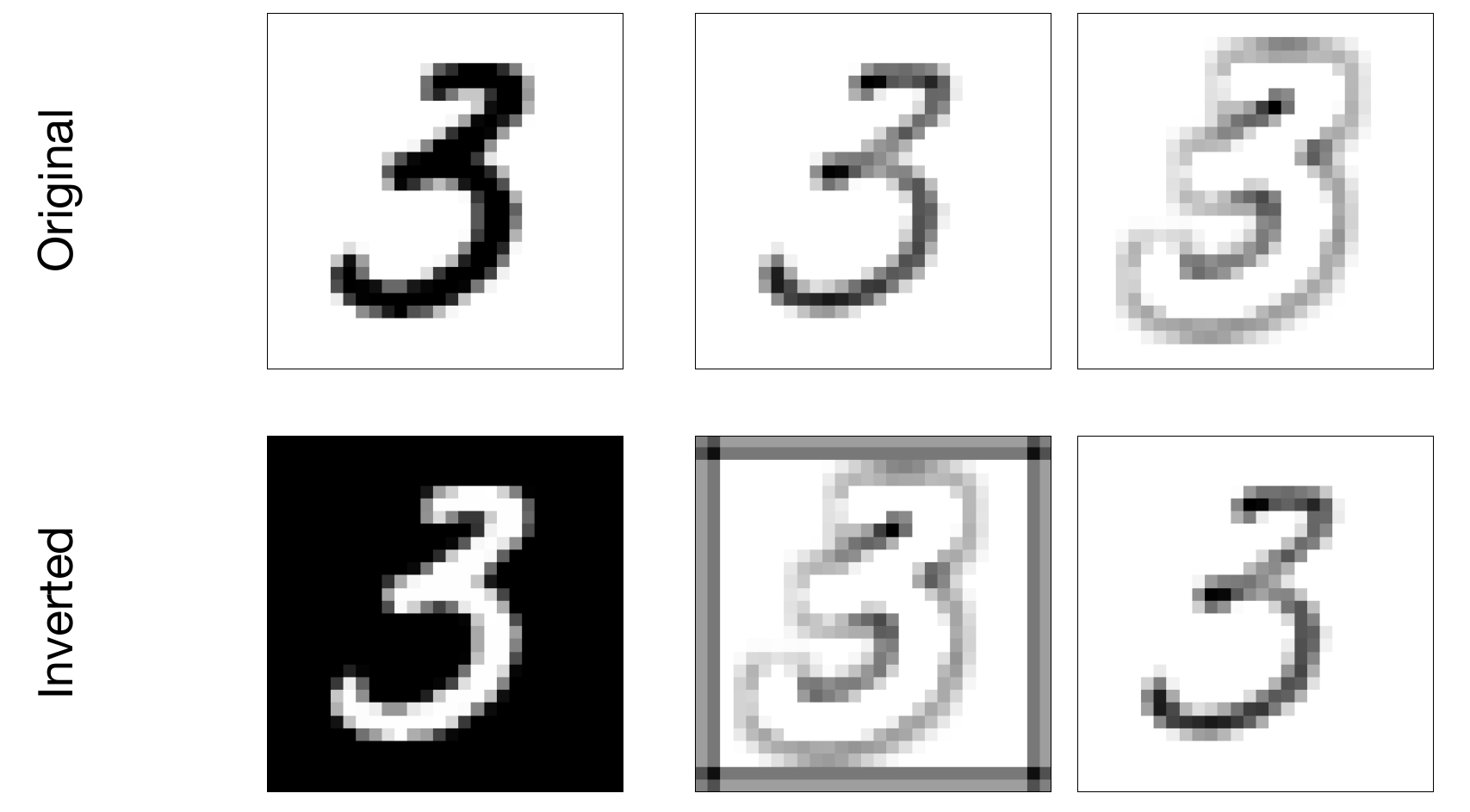}
   \caption{The on- and off-convolutions are shown in the second and the third columns, respectively, for a sample of the MNIST dataset. In the first row we show the original image, and in the second row the inverted image, with all pixels subtracted from 255.}
    \label{fig:mnist}
\end{figure}

\begin{table}[t]
  \centering
    \caption{Test Accuracy of the networks on the original black-on-white and on the inverted white-on-black test sets. n=6.}
  \vspace*{1.5ex}
  \begin{tabular}{llll}
    \toprule
         & Basenet     & SM-CNN   & OOCS-CNN \\
    \midrule
      Original & $99.1_{\pm0.2}$ & $99.2_{\pm0.1}$ & $99.1_{\pm0.2}$   \\
      Inverted & $29.0_{\pm10.1}$ & $35.7_{\pm4.9}$ & $\mathbf{93.9_{\pm1.2}}$   \\
    \bottomrule
  \end{tabular}
  \label{mnist_table}
\end{table}


\textbf{Robustness evaluation.}
Table \ref{mnist_table} shows the test accuracy of the networks on the original and altered  MNIST test images. On the test set with inverted colors, the accuracy of the Basenet and SM-CNN drops sharply. The OOCS-CNN on the other hand, achieves a much higher accuracy on this challenging test set. Figure~\ref{fig:mnist} shows that on the samples with black foreground, the on-center convolution extracts features similar to the ones extracted by the off-center convolution on the samples with black background and vice versa. Being equipped with both in OOCS-CNN is the reason it still performs better on the altered test set.

\section{Discussion, Scope and Conclusions}
\label{sec:Discussion}
Inspired by the retinal ganglial cells in vertebrates, we first proposed an on-off center-surround (OOCS) enhancement to the receptive fields of vision networks. We then showed that the OOOCS pathways impose an inductive bias on the vision networks, which enhances their robustness to the variation of lighting conditions. We took advantage of the studies in the field of Neuroscience and used an improved CS kernel as a ubiquitous block for obtaining more accurate and more robust vision based networks. The OOCS addition to a CNN is easy to implement, does not increase the number of trainable parameters, and it increases its performance.

\textbf{Performance Out of IID Setting.} OOCS performs extremely well under test-set distribution-shifts. This is a direct result of the complementary on- and off-pathways.

\textbf{Inductive Biases vs. Regularization.} Our experiments (Table 2) show that standard regularization methods such as L2, Dropout and batch-normalization are less effective in improving generalization under distribution shifts, compared to the OOCS. Figure~\ref{fig:reg_vs_ind} gives the absolute distance of the familiar test error (of the same lighting condition as the training set), to the test error of different lighting conditions. We observe that OOCS has a lower distance for distribution shifts compared to regularization methods.
\begin{figure}[t]
    \centering
    \includegraphics[width=0.35\textwidth]{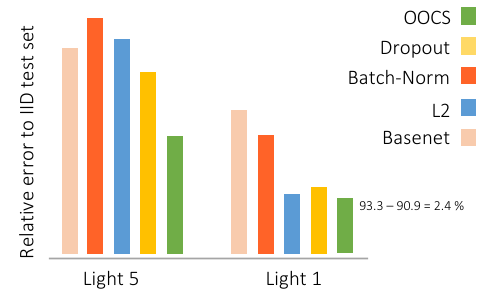}
    \vspace{-3mm}
    \caption{Relative test error for different lighting conditions compared to the IID test error (shorter bar is better). The values are taken from Table \ref{light0-table}. Each bar represents the deviation of the test error from the test error for Light0.}
    \label{fig:reg_vs_ind}
    \vspace*{-4ex}
\end{figure}

\textbf{Robustness to Digital Distribution Shifts.} What happens if we add OOCS filters to a ResNet architecture? would this improve their robustness properties to distribution shifts such as Gaussian perturbations, brightness, and contrast. We studied this systematically in a set of experiments where we added OOCS filters to a ResNet-34, and compared its performance under variation of these perturbations (See Figures S3 and S4 in the supplementary materials). We observed consistent improvements achieved when OOCS filters are deployed. 

\textbf{Usability and Impact of OOCS Blocks.} OOCS pathways can be added to any vision-based model regardless of their exact architecture. They can be interpreted as targeted residual blocks explicitly designed to perform specialized edge detection. In decision critical applications \cite{lechner2019designing,lechner2020neural}, object recognition under different lighting conditions is a game changer. For instance vision-based self-driving cars are sensitive to lighting conditions \cite{lechner2021adversarial,lechner2020gershgorin}. OOCS can robustify driving under direct sun or going from shaded to well lit regions.

\textbf{Bio-inspirations and Future Work.} In this work, our main focus was to bring insights from a simplified model of the retinal cells to designing more robust image classification modules. There are additional top-down connections between layers in the retina, and the sizes of the receptive fields change depending on their location. It is therefore worth exploring the implementation of a complete model of the retina and to further improving the performance of this model, similar to many works that aim to transform biological mechanisms \cite{lechner2017worm,sarma2018openworm,gleeson2018c302,dabney2020distributional} into better machine learning models \cite{hasani2020natural}. The OOCS helps a network to extract the inner and outer edges, hence more work can be done to investigate using it as a preprocessing or data augmentation tool, especially for medical image segmentation tasks.

\section*{Acknowledgements}
Z.B. is supported by the Doctoral College Resilient Embedded Systems, which is run jointly by the TU Wien's Faculty of Informatics and the UAS Technikum Wien. R.G. is partially supported by the Horizon 2020 Era-Permed project Persorad, and ECSEL Project grant no. 783163 (iDev40). R.H and D.R were partially supported by Boeing and MIT. M.L. is supported in part by the Austrian Science Fund (FWF) under grant Z211-N23 (Wittgenstein
Award).

\nocite{Strisciuglio2019}
\nocite{Azzopardi2014}
\bibliographystyle{icml2021}

\appendix
\onecolumn
\beginsupplement

\section{Theoretical Proofs and Calculations}

In this section, we bring the mathematical calculations and theoretical proofs.
\subsection{Proof of Proposition 1}
\label{sec:thm1}

The DoG model used in the main document is defined as in Equation~\eqref{eq:petrov}, where $\gamma$ with $\gamma\,{<}\,1$, defines the ratio between the radius $r$ of the center and that of the surround. This model allows us to analytically compute the variances, from the size of the receptive fields: 

\begin{equation}
\label{eq:petrov-supp}
DoG_{\sigma,\gamma}(x,y) =  \frac{A_c}{\gamma^2}\, e^{-\frac{x^2+y^2}{2\gamma^2\sigma^2}} - A_s\,e^{-\frac{x^2+y^2}{2\sigma^2}}
\end{equation}

The coefficients $A_c$ and $A_s$ are determined, by requiring that the sum of all positive values in Equation~\eqref{eq:petrov-supp} are equal to those of the negative values. Here, we make them to sum up to 1 and to -1, respectively:

\begin{equation}
\iint [DoG_{\sigma,\gamma}(x,y)]^{+} dx dy = 1,
\end{equation}
\begin{equation}
\iint [DoG_{\sigma,\gamma}(x,y)]^{-} dx dy = -1
\end{equation}
By $[z]^{+}$ and $[z]^{-}$ we denote the positive and the negative half wave rectification functions, respectively:
\begin{equation}
[z]^{+} = max(0,z),
\quad [z]^{-} = min(0,z)
\end{equation}

\begin{proposition}[DoG Coefficients]
\label{theo:infinity-1} In the infinite continuous case, the coefficients $A_c$ and $A_s$ are equal. 
\end{proposition}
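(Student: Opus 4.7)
The plan is to exploit the symmetry hidden in the two normalization constraints. Since $[z]^{+}+[z]^{-}=z$ holds pointwise, adding the two constraint integrals collapses them into a single statement: the integral of $DoG_{\sigma,\gamma}$ over all of $\mathbb{R}^2$ must equal $1+(-1)=0$. This is the only fact about the shape of the DoG I will need; in particular I never have to identify the sign region (the circle where center and surround cross over) explicitly, which is the part of the problem that would be genuinely messy if attacked head-on.

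Next I would evaluate the two Gaussian integrals in closed form. Using the standard identity $\iint_{\mathbb{R}^2} e^{-(x^2+y^2)/(2\tau^2)}\,dx\,dy = 2\pi\tau^2$ with $\tau=\gamma\sigma$ on the center term and $\tau=\sigma$ on the surround term, the prefactor $1/\gamma^{2}$ on the center Gaussian cancels the $\gamma^{2}$ produced by the integration. Concretely,
\begin{equation*}
\iint_{\mathbb{R}^2} DoG_{\sigma,\gamma}(x,y)\,dx\,dy \;=\; \frac{A_c}{\gamma^{2}}\cdot 2\pi\gamma^{2}\sigma^{2} \;-\; A_s \cdot 2\pi\sigma^{2} \;=\; 2\pi\sigma^{2}\,(A_c-A_s).
\end{equation*}
Setting this equal to $0$, by the first step, forces $A_c=A_s$, which is the claim.

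The main obstacle I expect is essentially bookkeeping: making precise that the identity $[z]^{+}+[z]^{-}=z$ may be integrated termwise so that summing the two given constraint integrals really does yield the total integral of the DoG. This is immediate in the infinite continuous case since $DoG_{\sigma,\gamma}$ is a difference of two $L^{1}(\mathbb{R}^{2})$ Gaussians, so linearity of the Lebesgue integral applies with no subtlety. In the finite discrete case the same cancellation of the center's $1/\gamma^{2}$ against the Gaussian mass is only approximate (the tails of both Gaussians are truncated and a Riemann sum replaces the integral), which is exactly why the proposition is stated in the continuous limit and the discrete values of $A_c$ and $A_s$ are merely described as ``very similar'' in the main text, and why the formula for $\sigma$ in Equation~\eqref{eq:sigma} is written with the approximation sign.
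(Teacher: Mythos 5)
Your proof is correct, and it arrives at the paper's pivotal equation by a genuinely lighter route. Both arguments ultimately rest on the same fact: summing the two normalization constraints forces the total integral of $DoG_{\sigma,\gamma}$ over $\mathbb{R}^2$ to vanish, and that total integral equals $2\pi\sigma^2(A_c - A_s)$, whence $A_c = A_s$. The difference is how this is established. The paper first invokes the sign structure of the kernel --- positive on the central disk of radius $r_c$, negative on the surrounding annulus --- in order to strip the half-wave rectifiers, then passes to polar coordinates, evaluates the center and surround integrals separately out to a finite radius $r_s$, adds the two resulting equations, and finally sends $r_s \to \infty$ so the boundary terms at $r_s$ die and the terms at $r_c$ cancel. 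You instead add the constraints at the integrand level via the pointwise identity $[z]^{+}+[z]^{-}=z$ and evaluate the full integral in one stroke with the closed-form identity $\iint_{\mathbb{R}^2} e^{-(x^2+y^2)/(2\tau^2)}\,dx\,dy = 2\pi\tau^2$, the $1/\gamma^2$ prefactor cancelling the $\gamma^2$ from the center Gaussian's mass. What your route buys: you never need to locate the zero-crossing of the DoG (the paper's removal of the rectifiers tacitly assumes the kernel changes sign exactly once in the radius, a point left unargued there), you avoid the polar change of variables and the limit bookkeeping entirely, and your integrability remark (each Gaussian lies in $L^1(\mathbb{R}^2)$) is precisely what licenses the termwise addition. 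The paper's longer computation yields no reusable by-product for this proposition --- the subsequent variance formula in Equation~\eqref{eq:sigma} is derived from the zero-crossing condition $DoG_{\sigma,\gamma}=0$ at $r_c$, not from these partial integrals --- so your streamlining loses nothing.
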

\begin{proof}
We have the following equalities:

\begin{equation}
\label{eq:centersum}
\iint_{\mathbb{R}^2} [\frac{A_c}{\gamma^2}\, e^{-\frac{x^2+y^2}{2\gamma^2\sigma^2}} - A_s\,e^{-\frac{x^2+y^2}{2\sigma^2}}]^{+} dx dy = 1,
\end{equation}
\begin{equation}
\label{eq:surroundsum}
\iint_{\mathbb{R}^2} [\frac{A_c}{\gamma^2}\, e^{-\frac{x^2+y^2}{2\gamma^2\sigma^2}} - A_s\,e^{-\frac{x^2+y^2}{2\sigma^2}}]^{-} dx dy = -1
\end{equation}

By transforming the integrals to the polar coordinates we have the equations below, where $r_s$ is the radius of the surround, and $r_s\to\infty$.

\begin{equation}
\int_{0}^{2\pi}\int_{0}^{r_s} [\frac{A_c}{\gamma^2}\,r\, e^{-\frac{r^2}{2\gamma^2\sigma^2}} - A_{s}\,r\,e^{-\frac{r^2}{2\sigma^2}}]^{+} dr d\theta = 1,
\end{equation}
\begin{equation}
\int_{0}^{2\pi}\int_{0}^{r_s} [\frac{A_c}{\gamma^2}\,r\, e^{-\frac{r^2}{2\gamma^2\sigma^2}} - A_{s}\,r\,e^{-\frac{r^2}{2\sigma^2}}]^{-} dr d\theta = -1
\end{equation}

The positive values are in the center with radius of $r_c$ and the negative values are in a ring between the center and surround. So we can remove the half wave rectifiers as follows: 

\begin{equation}
\label{eq:centint}
2\pi\int_{0}^{r_c} \frac{A_c}{\gamma^2}\,r\, e^{-\frac{r^2}{2\gamma^2\sigma^2}} - A_{s}\,r\,e^{-\frac{r^2}{2\sigma^2}} \,dr = 1,
\end{equation}
\begin{equation}
\label{eq:surrtint}
2\pi\int_{r_c}^{r_s} \frac{A_c}{\gamma^2}\,r\, e^{-\frac{r^2}{2\gamma^2\sigma^2}} - A_{s}\,r\,e^{-\frac{r^2}{2\sigma^2}} \,dr = -1
\end{equation}

After calculating the integrals:

\begin{equation}
2\pi(A_c\,\sigma^2\, e^{-\frac{r^2}{2\gamma^2\sigma^2}} - A_{s}\,\sigma^2\,e^{-\frac{r^2}{2\sigma^2}})\Big|_0^{r_c} = 1, \nonumber
\end{equation}
\begin{equation}
2\pi(A_c\,\sigma^2\, e^{-\frac{r^2}{2\gamma^2\sigma^2}} - A_{s}\,\sigma^2\,e^{-\frac{r^2}{2\sigma^2}})\Big|_{r_c}^{r_s} = -1 \nonumber
\end{equation}
\begin{equation}
2\pi\sigma^2(A_c\, e^{-\frac{r_c^2}{2\gamma^2\sigma^2}} - A_{s}\,e^{-\frac{r_c^2}{2\sigma^2}}) - 2\pi\sigma^2(A_c\, e^{0} - A_{s}\,e^{0}) = 1, \nonumber
\end{equation}
\begin{equation}
2\pi\sigma^2(A_c\, e^{-\frac{r_s^2}{2\gamma^2\sigma^2}} - A_{s}\,e^{-\frac{r_s^2}{2\sigma^2}}) - 2\pi\sigma^2(A_c\, e^{-\frac{r_c^2}{2\gamma^2\sigma^2}} - A_{s}\,e^{-\frac{r_c^2}{2\sigma^2}}) = -1
\end{equation}

Adding the two equations together, we have:

\begin{equation}
\lim_{r_s \to\infty}2\pi\sigma^2(A_c\, e^{-\frac{r_s^2}{2\gamma^2\sigma^2}} - A_{s}\,e^{-\frac{r_s^2}{2\sigma^2}}) - 2\pi\sigma^2(A_c - A_{s}) = 0 \nonumber
\end{equation} 
\begin{equation}
= 2\pi\sigma^2(A_c\, e^{-\infty} - A_{s}\,e^{-\infty}) - 2\pi\sigma^2(A_c - A_{s}) = 0 \nonumber
\end{equation}
\begin{equation}
2\pi\sigma^2(A_c - A_{s}) = 0 \Rightarrow A_c = A_s
\end{equation}
\end{proof}

\subsection{Computation of the Variance}
\label{subsec:oo}
The $DoG_{\sigma,\gamma}(x,y)$ is equal to zero on the border of the center and surround. The radius equals to $r_c$ on this border, meaning that $x^2+y^2=r_c^2$ when $DoG_{\sigma,\gamma}(x,y) = 0$. so by setting the $DoG_{\sigma,\gamma}(x,y) = 0$ we have:

\begin{equation}
\frac{A_c}{\gamma^2}\, e^{-\frac{r_c^2}{2\gamma^2\sigma^2}} - A_s\,e^{-\frac{r_c^2}{2\sigma^2}} = 0 \nonumber
\end{equation}

\begin{equation}
\ln(A_c) - 2\ln(\gamma) - \frac{r_c^2}{2\gamma^2\sigma^2} - \ln(A_s) + \frac{r_c^2}{2\sigma^2} = 0 \nonumber
\end{equation}
\begin{equation}
\ln(A_c)- \ln(A_s) - 2\ln(\gamma) = \frac{r_c^2}{2\gamma^2\sigma^2}  - \frac{r_c^2}{2\sigma^2} \nonumber
\end{equation}
\begin{equation}
\ln(\frac{A_c}{A_s}) - 2\ln(\gamma) = \frac{r_c^2(1-\gamma^2)}{2\gamma^2\sigma^2} \nonumber
\end{equation}
\begin{equation}
\sigma^2 = \frac{r_c^2(1-\gamma^2)}{2\gamma^2(\ln(\frac{A_c}{A_s}) - 2\ln(\gamma))} \nonumber
\end{equation}
\begin{equation}
\sigma = \frac{r_c}{\gamma}\sqrt{\frac{1-\gamma^2}{2\ln(\frac{A_c}{A_s})-4\ln(\gamma)}} 
\end{equation}

Based on Proposition 1, the values of $A_c$ and $A_s$ are equal in the infinite continuous case. Since in the finite discrete case those values are very close, we can approximate the value of $\sigma$: 

\begin{equation}
\label{eq:sigma-1}
\sigma \approx \frac{r_c}{2\gamma}\sqrt{\frac{1-\gamma^2}{-\ln{\gamma}}}
\end{equation}

\subsection{Proof of Theorem 1}
\label{subsec:thm2}

We use Equation~\eqref{eq:petrov-supp} to compute the weights in the On-center kernel matrix $DoG_{\rm On}$. For the Off-center kernel $DoG_{\rm Off}$, we use the same equation with the signs inverted. For a given input  $\chi$, we calculate the On and Off responses by convolving $\chi$ with the computed fixed kernels separately:
\begin{equation}
\chi_{\rm On}[x,y] = (\chi * DoG_{\rm On})[x,y],
\end{equation}
\begin{equation}
\chi_{\rm Off}[x,y] = (\chi * DoG_{\rm Off})[x,y]
\end{equation}

Note that the On and Off convolutions cover the input image completely. These two convolutions result in the following equations, when the kernel is in the shape of a square:

\begin{equation}
\label{eqn:on}
\chi_{\rm On}[x,y] 
= 
\int_{-r_s}^{r_s}\int_{-r_s}^{r_s} \chi(x+\rho,y+\tau)\,(\frac{A_c}{\gamma^2}\, e^{-\frac{\rho^2+\tau^2}{2\gamma^2\sigma^2}} - A_s\,e^{-\frac{\rho^2+\tau^2}{2\sigma^2}})\,d\rho\,d\tau
\end{equation}
\begin{equation}
\label{eqn:off}
\chi_{\rm Off}[x,y] 
= 
\int_{-r_s}^{r_s}\int_{-r_s}^{r_s} \chi(x+\rho,y+\tau)\,(A_s\, e^{-\frac{\rho^2+\tau^2}{2\sigma^2}} - \frac{A_c}{\gamma^2}\,e^{-\frac{\rho^2+\tau^2}{2\gamma^2\sigma^2}})\, d\rho\,d\tau
\end{equation}

\textbf{Proof of Theorem \ref{theo:main}
The on- and off-path\-ways learn unique and complementary features.
}

\begin{proof}
We prove this theorem by contradiction. Assume that the features extracted by the On convolution are identical to the features extracted by the Off convolution. Now suppose the input image has a small spot of light (smaller than the center of our kernels) on a dark background. We first convolve this image with an On kernel. If the spot of light lies in the center of the kernel, the convolution will result in a response close to $1$, according to the Equations~\eqref{eq:centersum},~\eqref{eq:surroundsum}, and ~\eqref{eqn:on}. If the spot of light lies in the surround, the convolution will result in a  negative response. As a result we obtain an activation map with values close to $1$ where the light spot is located, negative values in the outer edges of the light spot, and zero values everywhere else. 

When we convolve the same image with an Off kernel we obtain the following. If the spot of light lies in the center of the kernel, then we obtain a negative response. If it lies in the surround, then it will result in a positive response close to zero, according to the Equations~\eqref{eq:centersum},~\eqref{eq:surroundsum}, and~\eqref{eqn:off}. Hence, the Off convolution results in an activation map with small values in the outer edges of the light spot, negative values where the light spot is located, and zero values everywhere else. Comparing the two activation maps, one can see that: 1)~The positive values are in different locations, and 2)~These values are close to $1$ for the On activation map, and close to zero for the Off activation map. This contradicts our initial assumption. Note that a similar but complementary argument can be made for an image with a small dark spot on a light background.
\end{proof}

\section{Experimental Setup}
Here, we describe the experimental setup for the tasks discussed in Tables 1, 2, 3, 4 and 5.

\subsection{Dataset description}
\textbf{Image classification.} We used a random subset of the Imagenet dataset~\cite{DBLP:conf/cvpr/DengDSLL009} with 60000 images from 100 categories. We used 5000 of the images for each of the validation and test sets. All samples were cropped around the center if they were not originally in square shapes, and then resized to $192\times192$ pixels.

\textbf{Robustness to illumination change.} We used small Norb dataset~\cite{DBLP:conf/cvpr/LeCunHB04} which contains images of toys from 5 generic categories: human figures, four-legged animals, airplanes, cars and trucks. The images of each category were taken from 10 toy instances in 6 different lighting conditions, 9 elevations, and 18 azimuths. The training set consists of the images from 5 of the instances of each category, and the rest 5 instances are in the test set. Figure~\ref{fig:norbs} shows one sample from each category in each of the lighting conditions. We separated the dataset based on the lighting conditions, and used the images from the first light, Light0, as the training set and tested our networks on testsets from all 6 different lighting conditions. Each of the training and test sets contained 4050 images of size $96\times96$ pixels.

\textbf{Robustness to distribution shifts.} We used MNIST dataset~\cite{lecun-mnisthandwrittendigit-2010} containing grayscaled images of handwritten digits. There are 60000 and 10000 samples in the training and test sets respectively. For testing, we inverted all the pixel values by subtracting them form 255.

\begin{figure}[t]
    \centering
    \includegraphics[width=0.8\textwidth]{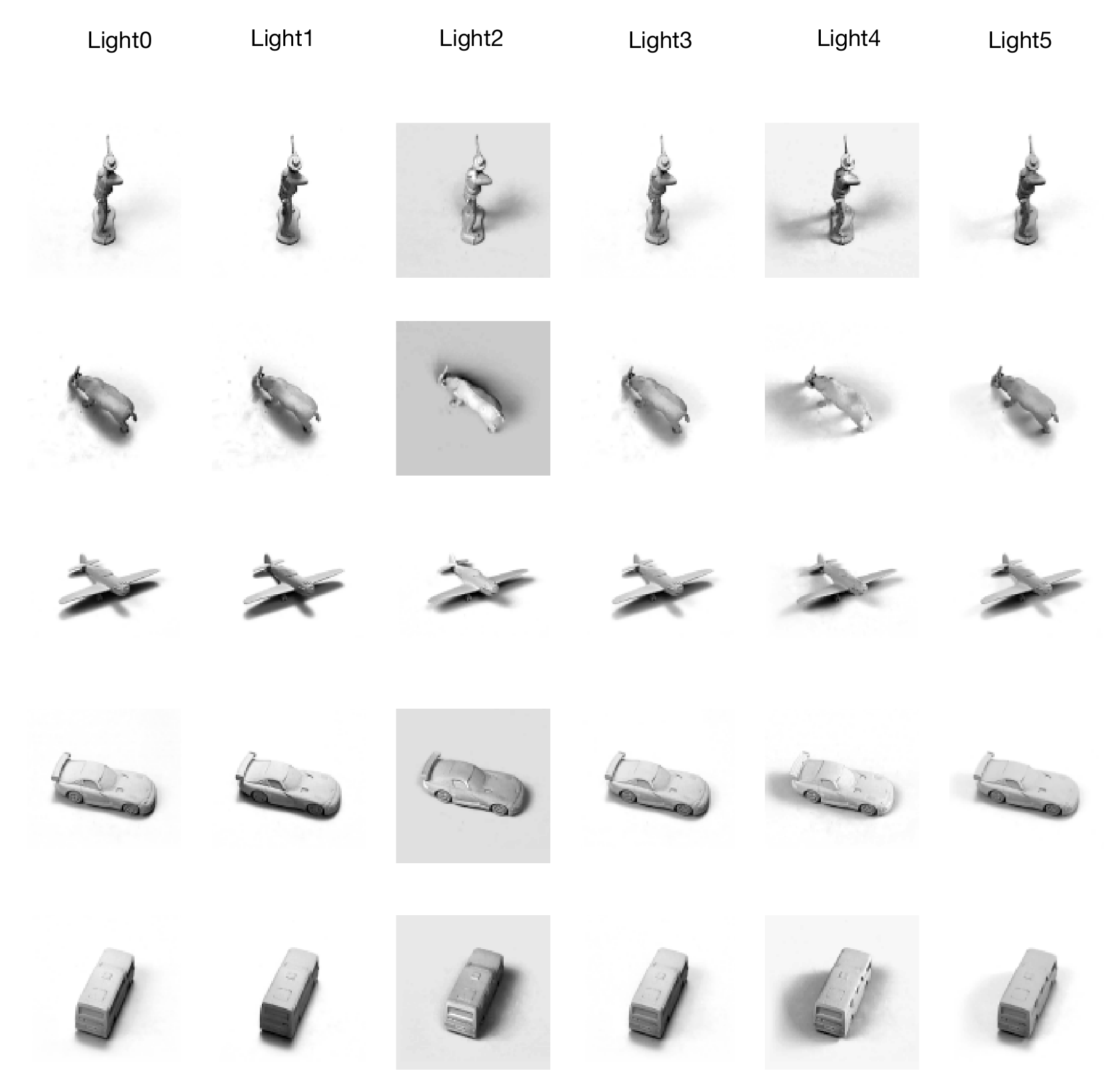}
    \caption{Example samples of the Norb dataset~\cite{DBLP:conf/cvpr/LeCunHB04} from each of the five categories and each of the lighting conditions.}
    \label{fig:norbs}
\end{figure}

\subsection{Network architectures and Hyper parameters}

For each of the experiments, we used a CNN as the base network, with different numbers of layers depending the dataset image sizes. Figure~\ref{fig:archs} shows the architectures of the base networks. We construct the other models from the base networks as discussed in the main paper. For the On and Off Center convolutions in OOCS-CNNs, we used kernels of size $5\times5$ for Imagenet and Norb datasets. We used smaller kernels of size $3\times3$ for the MNIST dataset, since the images are of smaller size. We calculated the On and Off resposes from the inputs and directly fed their summation to the network.

\begin{figure}[t]
    \centering
    \includegraphics[width=0.85\textwidth]{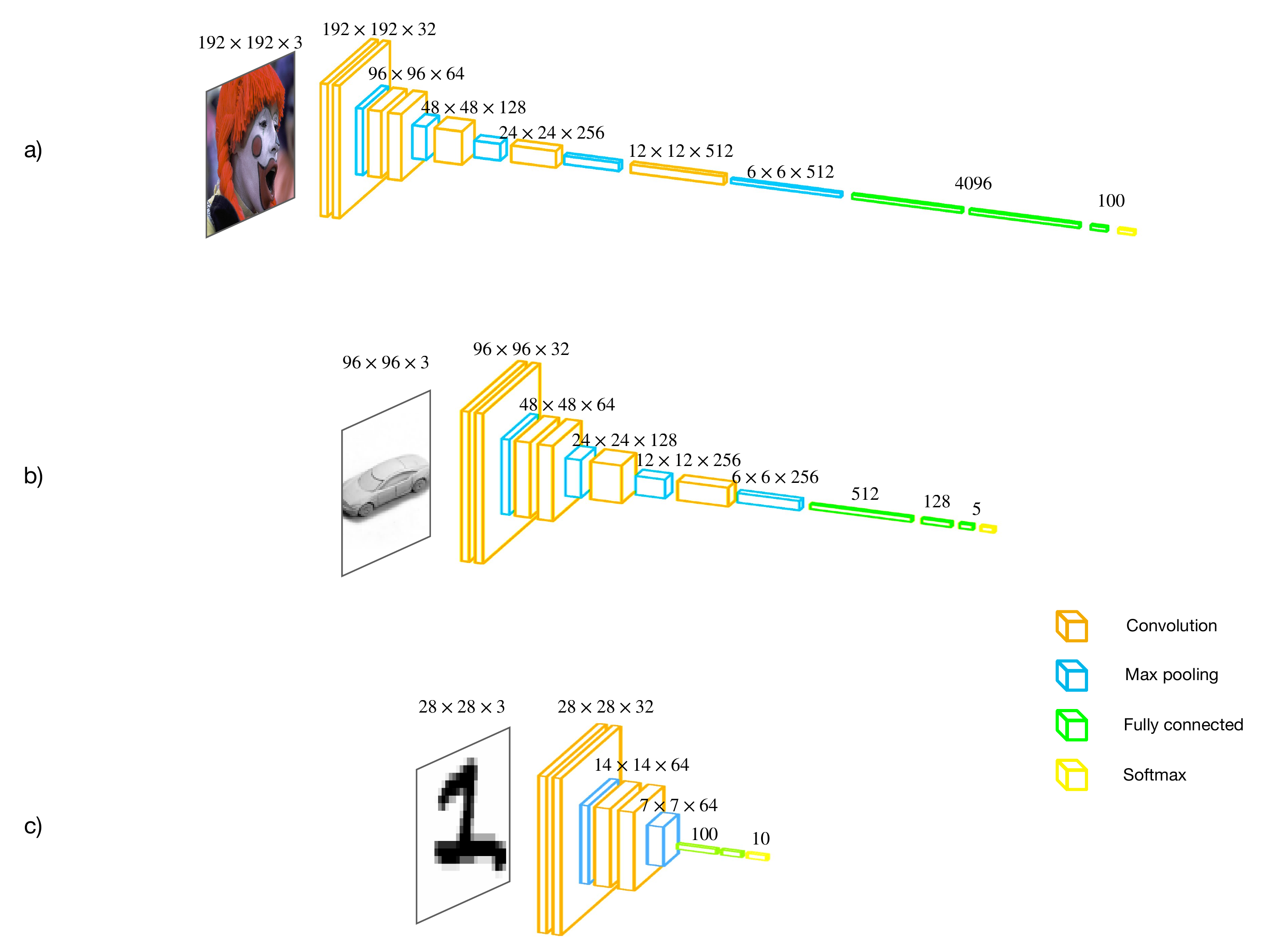}
    \caption{Base network architectures for a) Imagenet subset classification, b) robustness evaluation on Norb, and c) robustness evaluation on MNIST}
    \label{fig:archs}
\end{figure}

We had batch sizes of 64 in all experiments. We used Adam optimiser~\cite{DBLP:journals/corr/KingmaB14} for experiments on Imagenet and Norb, with a learning rate of $10^{-4}$. In the experiment on Imagenet, we decreased the learning rate to half after 10 epochs which was mainly in favour of the baselines. In the Imagenet experiments with ResNet-34 we use SGD optimiser and start with a learning rate of 0.1, which we decay by a factor of 0.1 every 20 epochs and we trained the networks for 60 epochs. For scaling the gradient descent steps, we use a Nesterov-momentum of 0.9.

\begin{figure}[h]
    \centering
    \includegraphics[width=0.7\textwidth]{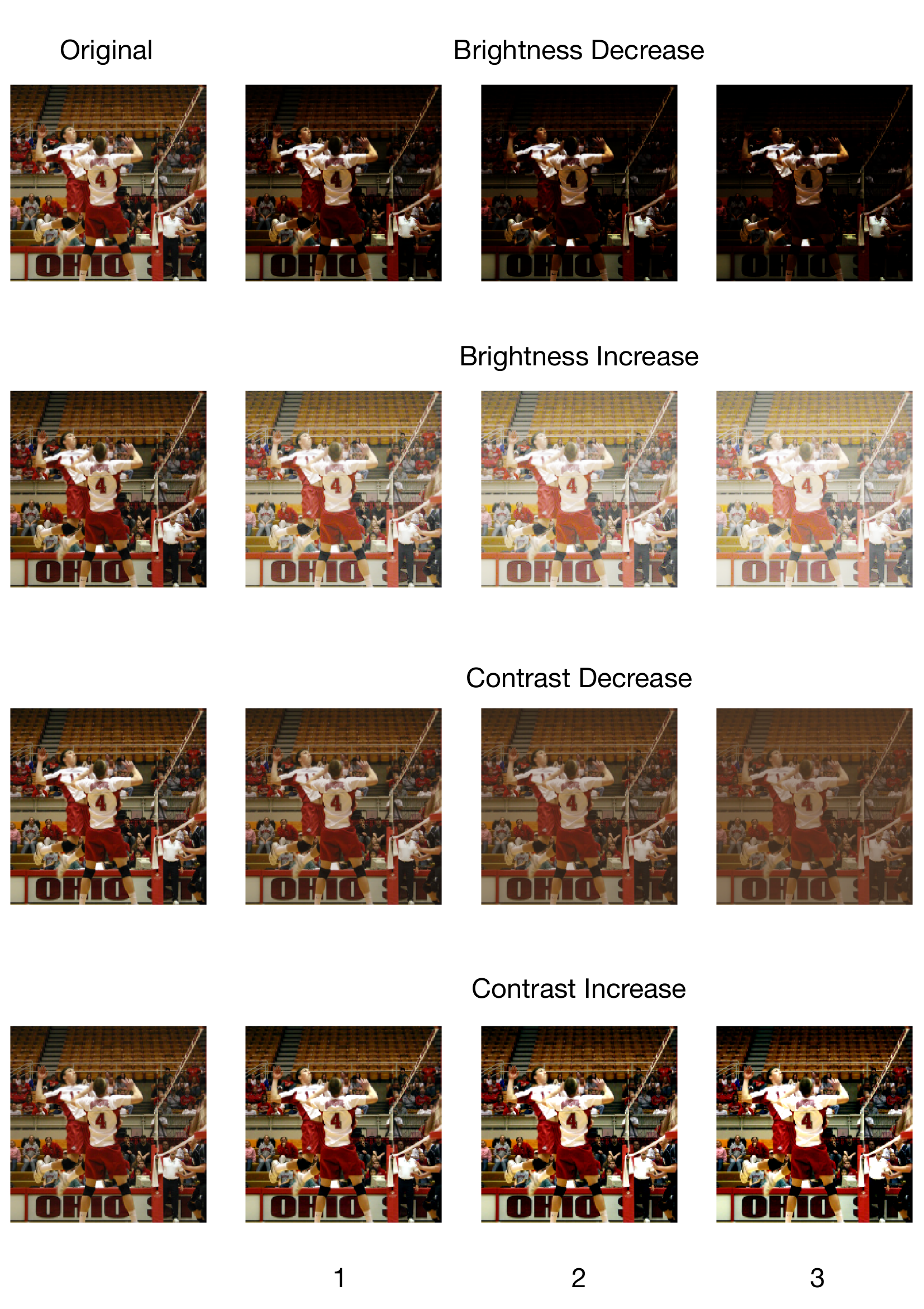}
    \caption{Sample Brightness and contrast variations we tested OOCS and Residual networks against.}
    \label{fig:robustess_sample}
\end{figure}

\begin{figure}[h]
    \centering
    \includegraphics[width=1\textwidth]{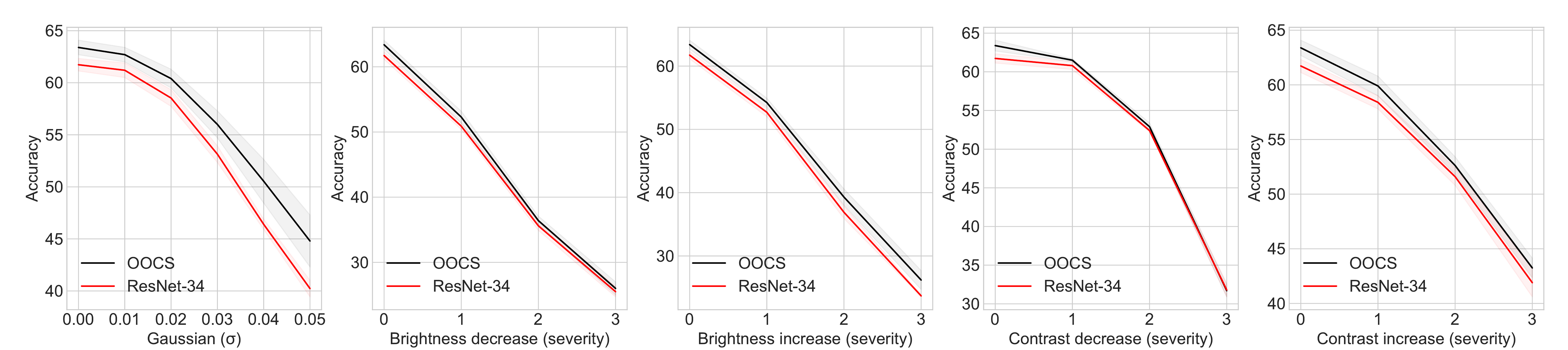}
    \caption{OOCS filters added to ResNet-34 consistently enhances the robustness of a network to perturbations such as Gaussian noise, Brightness and Contrast variations.}
    \label{fig:results_plots}
\end{figure}

\section{Experiments on Digital Distribution Shifts}

In this section we describe the experiments to evaluate the robustness of a ResNet-34 on the Imagnet subset compared to the same network equipped with OOCS.

We altered the test set images with 5 different digital perturbations: adding Gaussian noise, decreasing and increasing the brightness (Gamma correction), and decreasing and increasing the contrast. Figure~\ref{fig:robustess_sample} shows one sample of Imagenet dataset in different brightness and contrast changes with different severities.

The results of this experiment are summarized in tables S1-5 and figure~\ref{fig:results_plots}. As the results show, OOCS can enhance the robustness of a ResNet under digital distribution shifts.

\begin{table}[h]
   
  \centering
  \caption{Test Accuracy and variance for test images with Gaussian noise. n=6.}
  \vspace*{1.5ex}
  \begin{tabular}{lccccc}
    \toprule
    \multicolumn{1}{c}{} & \multicolumn{5}{c}{\textbf{Gaussian Noise ($\sigma$)}} \\
    \midrule
    \textbf{Models} &0.01&0.02&0.03&0.04&0.05\\
    \midrule
    ResNet-34&$61.2_{\pm0.7}$&$58.5_{\pm0.8}$&$53.14_{\pm0.8}$&$46.4_{\pm0.5}$&$40.23_{\pm0.8}$\\
    OOCS-ResNet-34&$\mathbf{62.7_{\pm0.7}}$&$\mathbf{60.4_{\pm0.9}}$&$\mathbf{56.0_{\pm1.3}}$&$\mathbf{50.5_{\pm2.1}}$ &$\mathbf{44.8_{\pm2.5}}$\\
    \bottomrule
  \end{tabular}
  \label{light0noisegaussian-table-gaus}
\end{table}

\begin{table}[h]
   
  \centering
  \caption{Test Accuracy and variance for test images with decreasing brightness. n=6.}
  \vspace*{1.5ex}
  \begin{tabular}{lccc}
    \toprule
    \multicolumn{1}{c}{} & \multicolumn{3}{c}{\textbf{Gamma Correction ($\gamma$)}} \\
    \midrule
    \textbf{Models} &2&3&4\\
    \midrule
    ResNet-34&$50.9_{\pm0.5}$&$35.6_{\pm0.7}$&$25.5_{\pm0.8}$\\
    OOCS-ResNet-34&$\mathbf{52.3_{\pm0.9}}$&$\mathbf{36.4_{\pm0.7}}$&$\mathbf{26.0_{\pm1.0}}$\\
    \bottomrule
  \end{tabular}
  \label{light0noisegaussian-table-brightness}
\end{table}

\begin{table}[h]
   
  \centering
  \caption{Test Accuracy and variance for test images with increasing brightness. n=6.}
  \vspace*{1.5ex}
  \begin{tabular}{lccc}
    \toprule
    \multicolumn{1}{c}{} & \multicolumn{3}{c}{\textbf{Gamma Correction ($\gamma$)}} \\
    \midrule
    \textbf{Models} &1/2&1/3&1/4\\
    \midrule
    ResNet-34&$52.7_{\pm0.7}$&$36.9_{\pm1.0}$&$23.7_{\pm0.2}$\\
    OOCS-ResNet-34&$\mathbf{54.2_{\pm0.7}}$&$\mathbf{39.3_{\pm0.7}}$&$\mathbf{26.2_{\pm1.5}}$\\
    \bottomrule
  \end{tabular}
  \label{light0noisegaussian-table-inc-brightness}
\end{table}

\begin{table}[h]
   
  \centering
  \caption{Test Accuracy and variance for test images with decreasing contrast. n=6.}
  \vspace*{1.5ex}
  \begin{tabular}{lccc}
    \toprule
    \multicolumn{1}{c}{} & \multicolumn{3}{c}{\textbf{Contrast Factor}} \\
    \midrule
    \textbf{Models} &0.8&0.6&0.4\\
    \midrule
    ResNet-34&$60.8_{\pm0.5}$&$52.4_{\pm0.2}$&$\mathbf{31.8_{\pm0.9}}$\\
    OOCS-ResNet-34&$\mathbf{61.5_{\pm0.2}}$&$\mathbf{52.9_{\pm0.5}}$&$31.7_{\pm0.8}$\\
    \bottomrule
  \end{tabular}
  \label{light0noisegaussian-table-dec-contrast}
\end{table}

\begin{table}[t]
   
  \centering
  
  \caption{Test Accuracy and variance for test images with increasing contrast. n=6.}
  \vspace*{1.5ex}
  \begin{tabular}{lccc}
    \toprule
    \multicolumn{1}{c}{} & \multicolumn{3}{c}{\textbf{Contrast Factor}} \\
    \midrule
    \textbf{Models} &1.2&1.4&1.6\\
    \midrule
    ResNet-34&$58.4_{\pm0.6}$&$51.6_{\pm0.8}$&$41.9_{\pm1.3}$\\
    OOCS-ResNet-34&$\mathbf{59.9_{\pm0.9}}$&$\mathbf{52.62_{\pm0.8}}$&$\mathbf{43.25_{\pm0.9}}$\\
    \bottomrule
  \end{tabular}
  \label{light0noisegaussian-table-inc-contrast}
\end{table}

\clearpage

\section{Code and Data Availability}
All code and data are included in \url{https://github.com/ranaa-b/OOCS}.





\end{document}